\newcommand\Tstrut{\rule{0pt}{2.2ex}}
\theoremstyle{plain}
\newtheorem{theorem}{Theorem}[section]
\newtheorem{corollary}[theorem]{Corollary}
\theoremstyle{definition}
\newtheorem{definition}[theorem]{Definition}
\theoremstyle{remark}
\icmltitlerunning{Learnability and Algorithm for Continual Learning}
\begin{document}

\twocolumn[

\icmltitle{Learnability and Algorithm for Continual Learning}



\icmlsetsymbol{equal}{*}

\begin{icmlauthorlist}
\icmlauthor{Gyuhak Kim}{equal,sch}
\icmlauthor{Changnan Xiao}{equal,comp1}
\icmlauthor{Tatsuya Konishi}{comp2}
\icmlauthor{Bing Liu}{sch}
\end{icmlauthorlist}

\icmlaffiliation{sch}{Department of Computer Science, University of Illinois at Chicago.}
\icmlaffiliation{comp1}{Work done at ByteDance.}
\icmlaffiliation{comp2}{KDDI Research (work done when this author was visiting Bing Liu's group)}

\icmlcorrespondingauthor{Bing Liu}{liub@uic.edu}

\icmlkeywords{Machine Learning, ICML}

\vskip 0.3in
]



\printAffiliationsAndNotice{\icmlEqualContribution} 

\begin{abstract}
This paper studies the challenging \textit{continual learning} (CL) setting of \textit{Class Incremental Learning} (CIL). CIL learns a sequence of tasks consisting of disjoint sets of concepts or classes. At any time, a single model is built that can be applied to predict/classify test instances of any classes learned thus far without providing any task related information for each test instance. Although many techniques have been proposed for CIL, they are mostly empirical. It has been shown recently that a strong CIL system needs a strong within-task prediction (WP) and a strong out-of-distribution (OOD) detection  for each task. However, it is still not known whether CIL is actually learnable. {This paper shows that CIL is learnable.} Based on the theory, a new CIL algorithm is also proposed. Experimental results demonstrate its effectiveness.
\end{abstract}

\section{Introduction}
\label{sec.intro}

Learning a sequence of tasks incrementally, called \textit{continual learning}, has attracted a great deal of attention recently~\citep{chen2018lifelong}. In the supervised learning context, each task consists of a set of concepts or classes to be learned. 
It is assumed that all tasks are learned in one neural network, which results in the key challenge of \textit{catastrophic forgetting} (CF) because when learning a new task, the system has to modify the network parameters learned from old tasks in order to learn the new task, which may cause performance degradation for the old tasks~\citep{McCloskey1989}. 
Two continual learning settings have been popularly studied: \textit{task incremental learning} (TIL)~\citep{van2019three} and \textit{class incremental learning} (CIL). 

In TIL, each task is an independent classification problem and has a separate model (the tasks may overlap). At test time, the task-id of each test instance is provided to locate the task-specific model to classify the test instance. 
\begin{definition}[Task Incremental Learning (TIL)]
\label{def:til}
    TIL learns a sequence of tasks, $1, 2, ..., T$. The training set of task $k$ is 
    $\mathcal{D}_k=\{((x^i_k, k), y_k^i)_{i=1}^{n_k}\}$, 
    where $n_k$ is the number of samples in task $k \in \mathcal{T} = \{1, 2, ..., T\}$, and $x^i_k \in \mathcal{X}$ is an input sample and $y^i_k \in {Y}_k \subseteq \mathcal{Y}$ ($=\bigcup_{k=1}^T {Y}_k$) is its label.
    A TIL system learns a function $f: \mathcal{X} \times \mathcal{T} \rightarrow \mathcal{Y}$ to assign a class label $y \in {Y}_k$ to $(x, k)$ (a test instance $x$ from task $k$). 
\end{definition}

For CIL, a single model is built for all tasks/classes  
learned thus far (the classes in each task are distinct). At test time, no task-id is provided for a test instance. 

\begin{definition}[Class Incremental Learning (CIL)] \label{def:cil}
    CIL learns a sequence of tasks, $1, 2, ..., T$. The training set of task $k$ is 
    $\mathcal{D}_k=\{(x^i_k, y_k^i)_{i=1}^{n_k}\}$, 
    where $n_k$ is the number of samples in task $k \in \mathcal{T} = \{1, 2, ..., T\}$, and $x^i_k \in \mathcal{X}$ is an input sample and $y^i_k \in {Y}_k \subset \mathcal{Y}$ ($=\bigcup_{k=1}^T {Y}_k$) is its label. 
    All ${Y}_k$'s are disjoint (${Y}_k \cap {Y}_{k'} = \emptyset,\, \forall k \neq k'$). 
    A CIL system learns a function (predictor or classifier) $f : \mathcal{X} \rightarrow \mathcal{Y}$ that assigns a class label $y$ to a test instance $x$. 
\end{definition}
CIL is a more challenging setting because in addition to CF, it has the \textit{inter-task class separation} (ICS)
\cite{kim2022theoretical} problem. ICS refers to the situation that since the learner has no access to the training data of the old tasks when learning a new task, then the learner has no way to establish the decision boundaries between the classes of the old tasks and the classes of the new task, which results in poor classification accuracy. 
\citet{kim2022theoretical} showed
that a good within-task prediction (WP) and a good \textit{out-of-distribution} (OOD) detection for each task are \textit{necessary} and \textit{sufficient} conditions for a strong CIL model. 
\begin{definition}[out-of-distribution (OOD) detection] \label{def:ood}
    Given the training set
    $\mathcal{D}=\{(x^i, y^i)_{i=1}^{n}\}$, where $n$ is the number of data samples, $x^i \in \mathcal{X}$ is an input sample and $y^i \in \mathcal{Y}$ is its class label. $\mathcal{Y}$ is the set of all classes in $\mathcal{D}$ and called the \textit{in-distribution} (IND) classes. Our goal is to learn a function $f : \mathcal{X} \rightarrow \mathcal{Y} \cup \{O\}$ that can detect test instances that do not belong to any classes in $\mathcal{Y}$  (OOD)), which are assigned to class $O$, denoting \textit{out-of-distribution} (OOD). 
\end{definition}

The intuition of the theory in  \cite{kim2022theoretical} is that if OOD detection is perfect for each task, then a test instance will be assigned to the correct task model to which the test instance belongs for classification, i.e., within-task prediction (WP). However, \cite{kim2022theoretical} does not prove that CIL is learnable. To our knowledge, no existing work has reported a learnability study for CIL (see Sec.~\ref{sec.related}). This paper performs the \textbf{CIL learnability} study. 

The proposed learnability proof requires two assumptions: \textbf{(1)} OOD detection is learnable. Fortunately, this has been proven in a recent paper~\cite{fang2022out}. \textbf{(2)} There is a mechanism that can completely overcome forgetting (CF) for the model of each task. Again, fortunately, there are many existing TIL methods that can eliminate forgetting, e.g., { parameter-isolation methods such as HAT~\cite{Serra2018overcoming} and SupSup~\cite{supsup2020}, which work by learning a sub-network in a shared network for each task. The sub-networks of all old tasks are protected when training a new task. Orthogonal projection methods such as PCP~\cite{kim2020continual} and CUBER~\cite{lin2022beyond}} can also overcome forgetting in the TIL setting. 

CIL can be solved by a combination of \textbf{[a]} a \textit{TIL method} that is able to protect each task model with no CF, and \textbf{[b]} a \textit{normal supervised learning method for WP} and \textbf{[c]} an \textit{OOD detection} method. \textbf{[b]} and \textbf{[c]} can be easily combined either \textbf{(i)} with an OOD detection model since it also learns the IND classes (see \textbf{Definition}~\ref{def:ood}) or \textbf{(ii)} a WP model that can also perform OOD detection. That is, for CIL, we simply replace the classification model built for each task in HAT/SupSup with a combined WP and OOD detection model. 

Based on the theory, we propose a new replay-based CIL method that uses the combination of \textbf{[a]} and \textbf{(ii)} (two separate heads for each task, one for WP and the other for OOD detection based on  the same feature extractor). 
This paper thus makes two main contributions:

(1). It performs \textbf{the first learnability study of CIL}. To the best of knowledge, no such a study has been reported so far. 

(2). Based on the theory, \textbf{a new CIL method}, called ROW (\textit{Replay, OOD, and WP} for CIL), is proposed. Experimental results show that it outperforms existing strong baselines. 

{It is interesting to note that our theory, including our earlier work in~\cite{kim2022theoretical}, in fact, unifies OOD detection and continual learning as it covers both~\cite{kim2023open}. Additionally, the theory is also applicable to \textbf{\textit{open world learning}}  because OOD detection and class incremental learning are two critical components of an open world learning system~\cite{liu2023ai}. }

\section{Related Work}
\label{sec.related}
To our knowledge, we are not aware of any paper that studies the learnability of CIL. Below, we survey the existing CL literature on both the theoretical and empirical sides. 

On the \textbf{theoretical side}. \citet{pentina2014pac} proposes a PAC-Bayesian framework to provide a learning bound on expected error by the average loss on the observed tasks. However, this work is not about CIL but about TIL. It focuses on knowledge transfer and assumes that all the tasks have the same input space and the same label space and the tasks are very similar. However, in CIL, every task has a distinct set of class labels. Furthermore, this work is not concerned with CF as earlier research in lifelong learning builds a separate model for each task. 
\citet{lee2021continual} studied the generalization error by task similarity. It is again about TIL. 
\citet{bennani2020generalisation} showed that a specific method called orthogonal gradient descent (OGD) gives a tighter generalization bound than SGD. 
As noted in Sec.~\ref{sec.intro}, empirically, the CF problem for TIL has been solved~\cite{Serra2018overcoming,kim2022theoretical}. Several techniques have also been proposed to carry out knowledge transfer \cite{ke2020continual,ke2021achieving,lin2022beyond}. 
Our work is entirely different as we study the learnability of CIL, which is a more challenging setting than TIL because of the additional difficulty of ICS~\cite{kim2022theoretical} in CIL. In this work, we are not concerned with knowledge transfer, which is mainly studied for the TIL setting. 
Recently, \citet{kim2022theoretical} showed that a good within-task prediction (WP) and a good OOD detection for each task are necessary and sufficient conditions for a strong CIL model. 
However, \citet{kim2022theoretical} did not show that CIL is learnable. This paper performs this study. It also proposes a new CIL algorithm. 

On the \textbf{empirical side}, a large number of algorithms have been proposed. They belong to several families. 
\textbf{(1).} {\textit{Regularization-based methods}} mitigate CF by restricting the learned parameters for old tasks from being updated significantly in a new task learning using regularizations \citep{Kirkpatrick2017overcoming,Zenke2017continual} or knowledge distillation  \citep{Li2016LwF,Zhu_2021_CVPR_pass}. 
Many existing approaches have used similar approaches \citep{ritter2018online,schwarz2018progress,xu2018reinforced,castro2018end,Dhar2019CVPR,lee2019overcoming,ahn2019neurIPS,Liu2020}. \textbf{(2).} {\textit{Replay-based methods}} alleviate CF by saving a small amount of training data from old tasks in a memory buffer and jointly train the model using the current data and the saved buffer data \cite{Rusu2016,Lopez2017gradient,Chaudhry2019ICLR,hou2019learning,wu2019large,rolnick2019neurIPS, NEURIPS2020_b704ea2c_derpp,rajasegaran2020adaptive,Liu2020AANets,Cha_2021_ICCV_co2l,yan2021dynamically,wang2022memory,guo2022online,kim2022multi}. Some methods in this family also study which samples in memory should be used in replay~\citep{aljundi2019online} or which samples in the training data should be saved \citep{Rebuffi2017,liu2020mnemonics}. \textbf{(3).} \textit{Pseudo-replay methods} generate pseudo replay data for old tasks to serve as the replay data~\citep{Kamra2017deep,Shin2017continual,wu2018memory,Seff2017continual,Kemker2018fearnet,hu2019overcoming,Rostami2019ijcai,ostapenko2019learning}. \cite{Zhu_2021_CVPR_pass} generates features instead of raw data. 
\textbf{(4).} \textit{Parameter-isolation methods} train and protect a sub-network for each task \citep{Mallya2017packnet,abati2020conditional,von2019continual_hypernet,rajasegaran2020itaml,hung2019compact,henning2021posterior}. Several systems, e.g., HAT \cite{Serra2018overcoming} and SupSup~\cite{supsup2020}, have largely eliminated CF. A limitation is that the task-id of each test instance must be provided. These methods are thus mainly used for TIL. \textbf{(5).} \textit{Orthogonal projection} methods learn each task in an orthogonal space to other tasks to reduce task interference or CF 
\cite{zeng2019continuous,kim2020continual,chaudhry2020continual,lin2022beyond}. 

Our empirical part of the work is related to but also very different from the above methods. We use the replay data as OOD training data to fine-tune an OOD detection head for each task based on the features learned for the WP head and uses the TIL method HAT to overcome CF. Some existing methods have used a TIL method for CIL with an additional task-id prediction technique. iTAML~\cite{rajasegaran2020itaml}'s task-id prediction needs the test data come in batches and each batch must be from the same task, which is unrealistic as the test sample usually comes one after another. CCG~\cite{abati2020conditional}, Expert Gate~\cite{Aljundi2016expert}, HyperNet~\cite{von2019continual_hypernet} and PR-Ent~\cite{henning2021posterior} either build a separate network or use entropy to predict the task-id. LMC~\cite{ostapenko2021continual} learns task specific knowledge via local modules capable of task-id prediction. However, they all perform poorly because none of the systems deal with the ICS problem, which is the key and is what our OOD detection is trying to tackle. In this line of work, the most closely related work to ours is MORE~\cite{kim2022multi}, which builds a model for each task treating the replay data as OOD data. However, in inference, it considers only the IND classes of each task, but not OOD detector. Our method is more principled and outperforms MORE. The methods in~\cite{kim2022theoretical} do not use replay data and perform worse.

\section{Learnability of CIL} \label{sec.learnability}
Before going to the learnability analysis, we first describe the intuition behind. \citet{kim2022theoretical} showed that given a test sample, the CIL prediction probability for each class in a task is the product of two prediction probabilities: \textit{within-task prediction} (WP) and \textit{task-id prediction} (TP), 
\begin{align}
     \mathbf{P}(X_{k,j} | x)
     &= \mathbf{P} (X_{k,j} | x, k) \mathbf{P}(X_{k} | x), \label{eq:cil_in_til_and_tp}
\end{align}
where $X_{k,j}$ is the domain of task $k$ and class $j$ of the task and $x$ is an instance. The first probability on the right-hand-side (RHS) is WP and the second probability on the RHS is TP. 
However, as mentioned earlier, \citet{kim2022theoretical} did not study whether CIL is learnable. 
We also note that \cite{kim2022theoretical} proved that TP and OOD are correlated and only differ by a constant factor. Based on the definition of OOD detection (\textbf{Definition}~\ref{def:ood}), the OOD detection model can also perform WP. In the recent work in \cite{fang2022out}, it is proven that OOD detection is learnable. 

We show that if the learning of each task does not cause catastrophic forgetting (CF) for previous tasks, then CIL is learnable.\footnote{The current learnability result only applies to offline CIL, but not to online CL, where the task boundary may be blurry.} 
Fortunately, {CF can be prevented for each task}
as several existing \textit{task incremental learning} (TIL) methods 
including but not limited to HAT~\cite{Serra2018overcoming} and SupSup~\cite{supsup2020} in the parameter-isolation family and PCP~\cite{kim2020continual} and CUBER~\cite{lin2022beyond} in the orthogonal projection family can ensure no CF \cite{kim2022theoretical}. 
HAT/SupSup basically trains a sub-network as the model for each task. In learning each new task, all the sub-networks for the previous tasks are protected with masks so that their parameters will not be modified in backpropagation. Thus, in this section, we assume that all tasks are learned without \textit{catastrophic forgetting} (CF).

\begin{table}
\vspace{-2mm}
\caption{
{Notations used in Sec.~\ref{sec.learnability}.
}
}
\label{Tab:notation_thm}
\centering
\resizebox{1.0\columnwidth}{!}{
\begin{tabular}{c c}
\toprule
\multicolumn{1}{c}{Notation} & \multicolumn{1}{c}{Description}\\
\midrule
$\mathcal{X}$ & feature space\\
$\mathcal{Y}$ & label space \\
$\mathcal{H}$ & hypothesis space \\
$X_k$ & random variable in $\mathcal{X}$ of task $k$ \\
$Y_k$ & random variable in $\mathcal{Y}$ of task $k$ \\
$D_{(X_k, Y_k)}$ & distribution of task $k$ \\
$l$ & loss function \\
$h$ & hypothesis function in $\mathcal{H}$ \\
$\mathbf{R}_{D_{(X, Y)}}$ & risk function, expectation of loss function on $D_{(X, Y)}$\\
$\mathcal{D}$ & set of all distributions \\ 
$S$ & set of samples \\
$D_{[1:k]}$ & weighted mixture of the first $k$ distributions\\
$\pi_k$ & mixture weight \\
$D_{k}$ & equivalent to $D_{[k:k]}$ and $D_{(X_k, Y_k)}$\\
$S|_{[k1:k2]}$ & set of support samples for $D_{[k1:k2]}$ \\
$S|_{k}$ & equivalent to $S|_{[k:k]}$ \\
$\mathbf{A}_k$ & algorithm after training the $k$-th task \\
$z_{k}^{i,j}$ & score function of the $j$-th class of the $i$-th task for task $k$ 
\\
$O$ & distribution of OOD \\
$\alpha$ & constant in $[0, 1)$ \\
$D^{\alpha}$ & mixture of $D$ and $O$ with weight $\alpha$ \\
$z_{k}^{o}$ & score for the OOD class \\
$\emptyset$ & empty set \\
$\epsilon_n$ & error rate with total number of samples $n$ \\

\bottomrule
\end{tabular}
}
\vspace{-3mm}
\end{table}

We now discuss the learnability of class incremental learning (CIL). The notations for the following discussion are described in Tab.~\ref{Tab:notation_thm}.
Let $\mathcal{X}$ be a feature space, $\mathcal{Y}$ a label space, and  $\mathcal{H}$ a hypothesis function space. Assume $\mathcal{H}$ is a \textit{ring}, because we construct hypothesis functions by addition and multiplication in the proof of \textbf{Theorem} \ref{thm:def2_not_imply_def1} and \textbf{Theorem} \ref{thm:def4_imply_def3}.
We use $D_{(X_k, Y_k)}$ to denote the distribution of task $k$. $X_k \in \mathcal{X}$ and $Y_k \in \mathcal{Y}$ are random variables following $D_{(X_k,  Y_k)}$. $\mathcal{D} = \{D_{(X, Y)}\}$ denotes the set of all distributions. $l(y_1, y_2) \geq 0$ denote a loss function. Denote $h \in \mathcal{H}$ as a hypothesis function. For any $X \in \mathcal{X},\, Y \in \mathcal{Y}$, the risk function $\mathbf{R}_{D_{(X, Y)}}(h) \overset{def}{=} \mathbb{E}_{(x, y) \sim D_{(X, Y)}}[l(h(x), y)]$. $S \overset{def}{=} \{(x, y) \sim D_{(X, Y)}\}$ is sampled from $D_{(X, Y)}$, denoted as $S \sim D_{(X, Y)}$. 

For a series of distributions $D_{(X_1, Y_1)}, \dots, D_{(X_T, Y_T)}$, we denote the mixture of the first $k$ distributions as $D_{[1:k]} = \frac{\sum_{i=1}^{k} \pi_i D_{(X_i, Y_i)}}{\sum_{i=1}^{k} \pi_i}$, where the mixture weights $\pi_1, \dots, \pi_T > 0$ with $\sum_k \pi_k = 1$.
For brevity, $D_k = D_{[k:k]} = D_{(X_k, Y_k)}$.

Denote $S|_{[k1:k2]} \overset{def}{=} \{s \in S | s \in supp\, D_{[k1:k2]}\}$. 
For simplicity, $S|_{k} = S|_{[k:k]}$.

Since continual learning tasks come one by one sequentially, we denote the hypothesis function that is found by an algorithm $\mathbf{A}$ after training the $k$-th task as $\mathbf{A}_k(S)$ with $S \sim D_{[1:k]}$. 
Strictly speaking, $h_k(x) = \mathbf{A}_k(S)(x)$ is only well-defined for $(x, y) \sim D_{[1:k]}$, and is not well-defined for $(x', y') \sim D_{k'},\, k' > k$. 
Even if some implementation may predict a real value by $h_k(x')$, we regard it as non-sense at time $k$ and only make sense until time $k'$. 

For the risk function, we will meet $\mathbf{R}_{D_{[k_1:k_2]}}(h_k)$ and we guarantee that $k_1 \leq k_2 \leq k$. 
Denote $$h_{k} = {\arg\max}_{1\leq i \leq k,j \in \{1, \dots\}} \{\dots, z_k^{i,j},\dots\},$$ where $z_k^{i,j} $ is the score function of the $j$-th class of the $i$-th task. 
The score function is any function that indicates which class the sample belongs to. For example, the score function could be the predicted logit of each class for a classification algorithm.
We calculate
\begin{align*}
\begin{split}
    &\mathbf{R}_{D_{[k_1:k_2]}}(h_{k}) = \mathbb{E}_{(x, y) \sim D_{[k_1:k_2]}} \\
    &[l({\arg\max}_{k_1\leq i \leq k_2,j \in \{1,\dots\}} \{\dots, z_k^{i,j}(x),\dots\}, y)].
\end{split}
\end{align*}
When we write $\mathbf{R}_{D^\alpha} (h)$ with $D^\alpha = (1 - \alpha) D + \alpha O$ (where $O$ denotes OOD and $\alpha \in [0, 1)$), we require $h$ to predict one additional OOD class as $$h_{k} = {\arg\max} \{\dots, z_k^{i,j},\dots; z_k^o \},$$ 
where $z_k^o$ is the score function of the OOD class. 

\begin{definition}[Fully-Observable Separated-Task Closed-World Learnability]
\label{def:full_obs_closed_learn}
    Given a set of distributions $\mathcal{D}$, a hypothesis function space $\mathcal{H}$, we say CIL is learnable 
    if there exists an algorithm $\mathbf{A}$ and a sequence $\{\epsilon_n | \lim_{n \rightarrow +\infty} \epsilon_n = 0\}$
    s.t. \textbf{(i)} for any $D_{1}, \dots, D_{T} \in \mathcal{D}$ with $supp\, D_{k} \cap supp\, D_{k'} = \emptyset, k \neq k'$, 
    and \textbf{(ii)} for any $\pi_1, \dots, \pi_T > 0$ with $\sum_k \pi_k = 1$,
    \begin{align*}
    \small
    \begin{split}
        \max_{k=1, \dots, T} \mathbb{E}_{S \sim D_{[1:k]}} [\mathbf{R}_{D_{[1:k]}} (\mathbf{A}_k(S)) - \inf_{h \in \mathcal{H}} \mathbf{R}_{D_{[1:k]}} (h)] < \epsilon_n.
    \end{split}
    \end{align*}
\end{definition}

We use $\epsilon$ to represent the error rate, where the index $n$ of $\epsilon_n$ represents the total number of samples. 
The equation $\lim_{n \rightarrow +\infty} \epsilon_n = 0$ means that the error rate decreases to $0$ as $n$ goes to $+\infty$.
\textbf{Definition} \ref{def:full_obs_closed_learn}, the risk function is calculated over $D_{[1:k]}$ at task $k$, which means the data of all the past tasks and the current task are observable for optimization.
It is a desirable property for CIL to take expectation over $D_{[1:k]}$ as it constructs a model that is equivalent to the model built with the full training data of all tasks. {Generally, when an algorithm satisfies \textbf{Definition} \ref{def:full_obs_closed_learn}, the system is already learnable because this is just the traditional supervised learning which can see/observe all the training data of all tasks and there is no OOD data involved (which means the \textit{closed-world}).
However, when we apply the algorithm $\mathbf{A}$ to solve for $\mathbf{A} (S)$ in practice, we usually cannot access all samples in $S$, which is partially-observable instead of fully-observable. {That is the case for continual learning as it assumes that in learning the new/current task, the training data of the previous/past tasks is not accessible, at least a major part of it.}

Due to the lack of full observations, we have to define $\mathbf{A}^r_k$ recursively. 
For any $S \sim D_{[1:k]}$, we define 
\begin{equation}
\label{eq:A_recur}
    \mathbf{A}^r_k (S) = \mathbf{A}^r_k (S|_k, \mathbf{A}^r_{k-1}(S|_{[1:k-1]})).
\end{equation}
The algorithm depends on implementation. In the following discussion, we assume that learning a new task does not interrupt the error bound of previous tasks. This is a valid assumption as existing algorithms \cite{Serra2018overcoming, supsup2020} achieve little or no forgetting. The version of \textbf{Definition} \ref{def:full_obs_closed_learn} for partial observations is as follows. 

\begin{definition}[Partially-Observable Separated-Task Closed-World Learnability]
\label{def:part_obs_closed_learn}
    Given a set of distributions $\mathcal{D}$, a hypothesis function space $\mathcal{H}$, we say CIL is learnable 
    if there exists an algorithm $\mathbf{A}$ and a sequence $\{\epsilon_n | \lim_{n \rightarrow +\infty} \epsilon_n = 0\}$
    s.t. \textbf{(i)} for any $D_{1}, \dots, D_{T} \in \mathcal{D}$ with $supp\, D_{k} \cap supp\, D_{k'} = \emptyset, k \neq k'$, 
    \textbf{(ii)} for any $\pi_1, \dots, \pi_T > 0$ with $\sum_k \pi_k = 1$,
    $$\max_{k=1, \dots, T} \mathbb{E}_{S \sim D_{[1:k]}} [\mathbf{R}_{D_{k}} (\mathbf{A}_k^r(S)) - \inf_{h \in \mathcal{H}} \mathbf{R}_{D_{k}} (h)] < \epsilon_n.$$
\end{definition}

In \textbf{Definition} \ref{def:full_obs_open_learn}, the risk function is calculated over $D_k$ alone as only the current task data $D_k$ is observable while the past tasks are not. It is desirable that \textbf{Definition} \ref{def:part_obs_closed_learn} implies \textbf{Definition} \ref{def:full_obs_closed_learn}, which transforms the learnability of a CIL problem into the learnability of a supervised problem.
Unfortunately, \textbf{Definition} \ref{def:part_obs_closed_learn} does not imply \textbf{Definition} \ref{def:full_obs_closed_learn}. 
\textbf{Theorem} \ref{thm:def2_not_imply_def1} shows that there exists a trivial hypothesis function that satisfies \textbf{Definition} \ref{def:part_obs_closed_learn} but doesn't satisfy \textbf{Definition} \ref{def:full_obs_closed_learn}.

\begin{theorem}[\textbf{Definition} \ref{def:part_obs_closed_learn} does not imply \textbf{Definition} \ref{def:full_obs_closed_learn}]
\label{thm:def2_not_imply_def1}
For a set of distributions $\mathcal{D}$ and a hypothesis function space $\mathcal{H}$, if \textbf{Definition} \ref{def:part_obs_closed_learn} holds and $\mathcal{H}$ has the capacity to learn more than one task, then there exists $h \in \mathcal{H}$ s.t. \textbf{Definition} \ref{def:part_obs_closed_learn} holds but \textbf{Definition} \ref{def:full_obs_closed_learn} doesn't hold. 
\end{theorem}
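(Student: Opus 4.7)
The plan is to sabotage any algorithm that witnesses Definition \ref{def:part_obs_closed_learn} so that it still satisfies that definition yet fails Definition \ref{def:full_obs_closed_learn}. The guiding intuition is that partial observability only constrains a learner's behavior on $\mathrm{supp}\, D_k$ (the current task), so it is free to misbehave arbitrarily on $\mathrm{supp}\, D_{[1:k-1]}$; but Definition \ref{def:full_obs_closed_learn} measures risk on the full mixture, so that freedom is enough to keep the joint excess risk bounded away from zero.

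First, I would take the algorithm $\mathbf{A}$ witnessing Definition \ref{def:part_obs_closed_learn}, with outputs $h_k = \mathbf{A}_k^r(S)$. Using the ring structure of $\mathcal{H}$, I would then build, for each $k$, an element $\chi_k \in \mathcal{H}$ that behaves as an indicator of $\mathrm{supp}\, D_k$ within $\mathrm{supp}\, D_{[1:k]}$. Such a $\chi_k$ is constructible from the score functions $z_k^{i,j}$ of the $k$ already-trained tasks, since the supports $\mathrm{supp}\, D_1, \dots, \mathrm{supp}\, D_k$ are disjoint and the small per-task risks ensure these score functions separate the supports; a polynomial combination inside the ring then yields a $\{0,1\}$-valued test for membership in $\mathrm{supp}\, D_k$. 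Finally, I would fix a trivial constant hypothesis $g_k \in \mathcal{H}$ that always outputs some label $y^\star \in Y_k$, and define the sabotaged hypothesis
\[
h'_k \;=\; h_k \cdot \chi_k \;+\; g_k \cdot (1-\chi_k) \;\in\; \mathcal{H},
\]
which lies in $\mathcal{H}$ because $\mathcal{H}$ is a ring.

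On $\mathrm{supp}\, D_k$ we have $\chi_k = 1$, so $h'_k = h_k$, and the per-task excess risk bound of $h_k$ transfers verbatim to $h'_k$, keeping Definition \ref{def:part_obs_closed_learn} satisfied. On $\mathrm{supp}\, D_i$ for any $i<k$, however, $\chi_k = 0$ and $h'_k = g_k$ predicts $y^\star \in Y_k$, which by disjointness lies outside $Y_i$, so the per-sample loss is at least a constant $L>0$. Therefore
\[
\mathbf{R}_{D_{[1:k]}}(h'_k) \;\ge\; L \cdot \frac{\sum_{i<k}\pi_i}{\sum_{i\le k}\pi_i} \;>\; 0,
\]
which does not shrink with $n$. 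Meanwhile, the assumption that $\mathcal{H}$ has the capacity to learn more than one task provides some $h^\star\in\mathcal{H}$ with $\mathbf{R}_{D_{[1:k]}}(h^\star)$ small (zero in a realizable case), so $\inf_{h\in\mathcal{H}} \mathbf{R}_{D_{[1:k]}}(h)$ is small and the excess joint risk of $h'_k$ stays bounded away from $0$, contradicting Definition \ref{def:full_obs_closed_learn}.

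The main obstacle is the construction of $\chi_k$ inside the ring $\mathcal{H}$: one must argue that the discriminant functions coming from the trained score heads $z_k^{i,j}$ separate the supports with a positive margin, and that a polynomial combination can be shaped into a clean indicator without leaving $\mathcal{H}$. The capacity-to-learn-more-than-one-task hypothesis is exactly what makes this possible, since it supplies a hypothesis in $\mathcal{H}$ that discriminates each $\mathrm{supp}\, D_i$ from the others, and the ring closure then yields the $\chi_k$. Once $\chi_k$ is in hand, the remainder is straightforward bookkeeping on the mixture weights $\pi_i$ and a standard lower bound on the loss of predicting a class from the wrong label subset.
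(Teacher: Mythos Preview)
Your high-level strategy matches the paper's exactly: take the hypothesis $h_k$ produced by the partially-observable learner, modify it so that it agrees with $h_k$ on $\mathrm{supp}\,D_k$ (preserving Definition~\ref{def:part_obs_closed_learn}) while always predicting a class of task $k$ on $\mathrm{supp}\,D_{[1:k-1]}$ (breaking Definition~\ref{def:full_obs_closed_learn}), and then invoke the capacity assumption to get the $\delta$-gap. Where your proposal diverges is in the mechanism, and that is where the gap lies.

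Your construction hinges on producing an exact $\{0,1\}$-valued indicator $\chi_k$ of $\mathrm{supp}\,D_k$ inside the ring $\mathcal{H}$. You acknowledge this is the main obstacle, but the justification offered does not close it. First, ring operations (addition and multiplication of real-valued functions) cannot in general manufacture an idempotent $\{0,1\}$-valued element out of continuous score functions; polynomial combinations of the $z_k^{i,j}$ remain real-valued. Second, the capacity assumption as defined in the paper only says that any hypothesis correct on a single task and wrong elsewhere has excess joint risk at least $\delta$; it does not furnish a discriminator of the supports, so your appeal to it for building $\chi_k$ is a misreading. Third, there is a type mismatch: $h_k$ and $g_k$ are label-valued (they are $\arg\max$'s), so the expression $h_k\cdot\chi_k + g_k\cdot(1-\chi_k)$ is not meaningful as a ring element; the manipulation has to happen at the level of scores, not labels.

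The paper sidesteps all of this with a one-line trick on the scores: writing $h_k=\arg\max_{i,j}\{z_k^{i,j}\}$, it sets $\tilde z_k^{i,j}(x)=\sigma(z_k^{i,j}(x))+i$ and $\tilde h_k=\arg\max_{i,j}\{\tilde z_k^{i,j}\}$. Because $\sigma\in(0,1)$, the additive shift by the task index $i$ forces $\tilde z_k^{i,j}<\tilde z_k^{i',j'}$ whenever $i<i'$, so $\tilde h_k$ always outputs a class of the latest task $k$; and because $\sigma$ is monotone, the within-task-$k$ ordering of scores is preserved, so $\mathbf{R}_{D_k}(\tilde h_k)=\mathbf{R}_{D_k}(h_k)$. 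No indicator is needed, and the capacity assumption is used only for the lower bound on the joint excess risk, exactly as you intended.
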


The proof is given in Appendix~\ref{sec.proof}. The main reason here is that only the samples of the current task are observable, while samples of both past and future tasks are hard to be exploited. 
From the perspective of forward looking, when training the current task, we have no access to any information of future tasks, where samples of future tasks are regarded as \textit{out-of-distribution} (OOD) samples with respect to the current and past tasks.
Inspired by \textbf{Theorem} \ref{thm:def2_not_imply_def1}, we include OOD detection into consideration and generalize \textbf{Definition} \ref{def:full_obs_closed_learn} to the open-world setting.

\begin{definition}[Fully-Observable Separated-Task Open-World Learnability]
\label{def:full_obs_open_learn}
    Given a set of distributions $\mathcal{D}$, a hypothesis function space $\mathcal{H}$, we say CIL is learnable 
    if there exists an algorithm $\mathbf{A}$ and a sequence $\{\epsilon_n | \lim_{n \rightarrow +\infty} \epsilon_n = 0\}$
    s.t. \textbf{(i)} for any $D_{1}, \dots, D_{T} \in \mathcal{D}$ with $supp\, D_{k} \cap supp\, D_{k'} = \emptyset, k \neq k'$, 
    \textbf{(ii)} for any $\pi_1, \dots, \pi_T > 0$ with $\sum_k \pi_k = 1$,
    and \textbf{(iii)} for any $O_{(X_1, Y_1)}, \dots, O_{(X_T, Y_T)} \in \mathcal{D}$, any $\alpha_1, \dots, \alpha_T \in [0, 1)$, 
    \begin{align*}
    \begin{split}
        \max_{k=1, \dots, T} \mathbb{E}_{S \sim D_{[1:k]}} [&\mathbf{R}_{D_{[1:k]}^{\alpha_{[1: k]}}} (\mathbf{A}_k(S)) \\
        &- \inf_{h \in \mathcal{H}} \mathbf{R}_{D_{[1:k]}^{\alpha_{[1: k]}}} (h)] < \epsilon_n, 
    \end{split}
    \end{align*}
    where $D_{[1:k]}^{\alpha_{[1: k]}} = \sum_{i=1}^k (1 - \alpha_i) D_{i} + \alpha_i O_{(X_i, Y_i)}$.
\end{definition}

The proof of \textbf{Definition} \ref{def:full_obs_open_learn} is guaranteed by previous work \cite{fang2022out}, which studies the learnablity of OOD detection. It's obvious that when \textbf{Definition} \ref{def:full_obs_open_learn} is satisfied, \textbf{Definition} \ref{def:full_obs_closed_learn} is satisfied, which is shown in \textbf{Theorem} \ref{thm:def3_imply_def1}.

\begin{theorem}[\textbf{Definition} \ref{def:full_obs_open_learn} implies \textbf{Definition} \ref{def:full_obs_closed_learn}]
\label{thm:def3_imply_def1}
For a set of distributions $\mathcal{D}$ and a hypothesis function space $\mathcal{H}$, if \textbf{Definition} \ref{def:full_obs_open_learn} holds, then \textbf{Definition} \ref{def:full_obs_closed_learn} holds. 
\end{theorem}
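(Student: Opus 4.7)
The plan is to observe that \textbf{Definition} \ref{def:full_obs_open_learn} contains \textbf{Definition} \ref{def:full_obs_closed_learn} as a special case, recovered by turning the OOD mixture off. Given any $D_1,\dots,D_T \in \mathcal{D}$ with pairwise disjoint supports and any mixture weights $\pi_1,\dots,\pi_T > 0$ summing to one (exactly the data posited by \textbf{Definition} \ref{def:full_obs_closed_learn}), I would invoke \textbf{Definition} \ref{def:full_obs_open_learn} on the same $D_i$ and $\pi_i$, together with an arbitrary choice of OOD distributions $O_{(X_i,Y_i)} \in \mathcal{D}$ and the corner-case weights $\alpha_1 = \cdots = \alpha_T = 0$, which the condition $\alpha_i \in [0,1)$ explicitly allows.

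With this choice the corrupted mixture collapses to the clean one,
$$
D_{[1:k]}^{\alpha_{[1:k]}} \;=\; \sum_{i=1}^{k} \bigl[(1-\alpha_i) D_i + \alpha_i O_{(X_i,Y_i)}\bigr] \;=\; D_{[1:k]},
$$
so the sample law $S \sim D_{[1:k]}$ is unchanged and the risk functional satisfies $\mathbf{R}_{D_{[1:k]}^{\alpha_{[1:k]}}}(\cdot) = \mathbf{R}_{D_{[1:k]}}(\cdot)$. Taking the infimum over the common hypothesis space $\mathcal{H}$ then gives $\inf_{h \in \mathcal{H}} \mathbf{R}_{D_{[1:k]}^{\alpha_{[1:k]}}}(h) = \inf_{h \in \mathcal{H}} \mathbf{R}_{D_{[1:k]}}(h)$. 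Consequently the same algorithm $\mathbf{A}$ and the same error sequence $\{\epsilon_n\}$ that witness \textbf{Definition} \ref{def:full_obs_open_learn} automatically witness \textbf{Definition} \ref{def:full_obs_closed_learn}, and the excess-risk bound transfers verbatim.

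The only delicate point is that in \textbf{Definition} \ref{def:full_obs_open_learn} hypotheses carry an auxiliary OOD score $z_k^o$, so the $\arg\max$ defining a prediction could in principle still output the symbol $O$, which is not a legal label under \textbf{Definition} \ref{def:full_obs_closed_learn}. Because $\mathcal{H}$ is common to both settings, I would handle this either by noting that the OOD slot is inert when $\alpha_i = 0$, or, to be fully rigorous, by post-processing $\mathbf{A}_k(S)$ to drop the OOD score before taking $\arg\max$; removing an option from the $\arg\max$ can only (weakly) decrease the loss on in-distribution data, so the post-processed algorithm retains the same excess-risk bound. This small bookkeeping step is the main---and essentially the only---obstacle, since the probabilistic content of the theorem becomes trivial once the specialization $\alpha_i \equiv 0$ is made.
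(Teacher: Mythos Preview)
Your proposal is correct and follows essentially the same approach as the paper: set $\alpha_1=\cdots=\alpha_T=0$ so that $D_{[1:k]}^{\alpha_{[1:k]}}=D_{[1:k]}$, and then the excess-risk bound from \textbf{Definition}~\ref{def:full_obs_open_learn} transfers directly to \textbf{Definition}~\ref{def:full_obs_closed_learn} with the same algorithm and error sequence. Your extra paragraph on handling the residual OOD score $z_k^o$ is a careful point the paper simply leaves implicit.
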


The proof is given in Appendix~\ref{sec.proof}. {When we have no access to samples of past tasks in practice, we define $\textbf{A}_k$ recursively as in Eq.~\ref{eq:A_recur}. 
The partial observable version of \textbf{Definition} \ref{def:full_obs_open_learn} is stated below.}
In \textbf{Definition} \ref{def:part_obs_open_learn}, the risk function is over $D_{k}$ instead of $D_{[1:k]}$ because it's the partial observable case.

\begin{definition}[Partially-Observable Separated-Task Open-World Learnability]
\label{def:part_obs_open_learn}
    Given a set of distributions $\mathcal{D}$, a hypothesis function space $\mathcal{H}$, we say CIL is learnable 
    if there exists an algorithm $\mathbf{A}$ and a sequence $\{\epsilon_n | \lim_{n \rightarrow +\infty} \epsilon_n = 0\}$
    s.t. \textbf{(i)} for any $D_{1}, \dots, D_{T} \in \mathcal{D}$ with $supp\, D_{k} \cap supp\, D_{k'} = \emptyset, k \neq k'$, 
    \textbf{(ii)} for any $\pi_1, \dots, \pi_T > 0$ with $\sum_k \pi_k = 1$,
    and \textbf{(iii)} for any $O_{(X_1, Y_1)}, \dots, O_{(X_T, Y_T)} \in \mathcal{D}$, any $\alpha_1, \dots, \alpha_T \in [0, 1)$, 
    $$\max_{k=1, \dots, T} \mathbb{E}_{S \sim D_{[1:k]}} [\mathbf{R}_{D_{k}^{\alpha_k}} (\mathbf{A}_k^r(S)) - \inf_{h \in \mathcal{H}} \mathbf{R}_{D_{k}^{\alpha_k}} (h)] < \epsilon_n, $$
    where $D_{k}^{\alpha_k} = (1 - \alpha_k) D_{k} + \alpha_k O_{(X_k, Y_k)}$.
\end{definition}

Note that \citet{fang2022out} showed that OOD detection is learnable
under a compatibility condition
for a single OOD detection problem and \textbf{Definition}~\ref{def:part_obs_open_learn} is about learnability with respect to an ensemble of multiple OOD detection problems. It is obvious that once each OOD detection problem is learnable, the ensemble of them is also learnable. With this definition, we derive that CIL is learnable as OOD detection is learnable. 
Different from \textbf{Theorem} \ref{thm:def2_not_imply_def1} that partially-observable learnability does not imply fully-observable learnability for the closed-world setting, \textbf{Theorem} \ref{thm:def4_imply_def3} shows that the learnability of a CIL system can be converted to a series of OOD learnability problems for the open-world setting (meaning there are OOD data). 

\begin{theorem}[\textbf{Definition} \ref{def:part_obs_open_learn} implies \textbf{Definition} \ref{def:full_obs_open_learn}]
\label{thm:def4_imply_def3}
For a set of distributions $\mathcal{D}$ and a hypothesis function space $\mathcal{H}$, if \textbf{Definition} \ref{def:part_obs_open_learn} holds, $\mathcal{H}$ enjoys enough capacity i.e. $\inf_{h \in \mathcal{H}} \mathbf{R}_{D_{[1:k]}^{\alpha_{[1: k]}}} (h) = 0$, and the loss function on all tasks is bounded by summation of loss function on each task i.e., Eq.~\ref{eq:loss_assump} in Appendix, then \textbf{Definition} \ref{def:full_obs_open_learn} holds and the upper bound $\epsilon_n$ is multiplied by $\max_{k=1, \dots, T} \sum_{t=1}^k \frac{\pi_{[t:T]}}{\pi_{[1:k]}}$. 
\end{theorem}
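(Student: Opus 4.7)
The plan is to reduce the fully-observable open-world statement to a collection of partially-observable open-world instances (one per task) by exploiting the universal quantification over auxiliary OOD distributions in Definition~\ref{def:part_obs_open_learn}, aggregating the resulting per-task hypotheses into a single arg-max predictor over all task-class scores, and then using the loss-additivity assumption to decompose the target risk.

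First I would construct the fully-observable algorithm $\mathbf{A}$ by running $\mathbf{A}^r$ on each task with carefully chosen auxiliary OOD distributions. Given the target distributions $O_1,\ldots,O_T$ and mixture weights $\pi_1,\ldots,\pi_T$, for each $t$ I define an augmented OOD distribution $\tilde O_t$ that mixes the target $O_t$ with the remaining task distributions $\{D_j:j\neq t\}$ using weights derived from the $\pi_j$'s. Applying Definition~\ref{def:part_obs_open_learn} to task $t$ with OOD component $\tilde O_t$ yields $h_t = \mathbf{A}^r_t(S|_{[1:t]})$ whose expected risk on $D_t^{\tilde\alpha_t} = (1-\tilde\alpha_t)D_t + \tilde\alpha_t \tilde O_t$ is at most the partial-observable rate. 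Because $\tilde O_t$ already contains the other tasks' in-distribution components, nonnegativity of the loss converts this single bound into simultaneous bounds on the risk of $h_t$ against $D_t$, $O_t$, and each $D_j$ with $j\neq t$, with multiplicative constants read off from the mixture weights that define $\tilde O_t$.

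Next I would aggregate the $h_t$'s into $h=\mathbf{A}_k(S) = \arg\max_{1\le t\le k, j}\{\ldots, z_t^{t,j}, \ldots; z^o\}$, with $z^o$ a symmetric combination (e.g., the minimum) of the per-task OOD scores. The loss-additivity assumption (Eq.~\ref{eq:loss_assump}) then bounds the pointwise loss of $h$ by the sum of pointwise losses of the $h_t$'s. Decomposing
\[
  \mathbf{R}_{D_{[1:k]}^{\alpha_{[1:k]}}}(h) \;=\; \sum_{i=1}^k \frac{\pi_i}{\pi_{[1:k]}}\, \mathbf{R}_{D_i^{\alpha_i}}(h),
\]
applying loss-additivity to get $\mathbf{R}_{D_i^{\alpha_i}}(h) \le \sum_{t=1}^k \mathbf{R}_{D_i^{\alpha_i}}(h_t)$, and then rebundling each cross-term $\mathbf{R}_{D_i^{\alpha_i}}(h_t)$ with $i\neq t$ into $\mathbf{R}_{D_t^{\tilde\alpha_t}}(h_t)$ via the construction of $\tilde O_t$, the weight bookkeeping collapses the double sum into a single sum whose prefactor is $\max_{k=1,\ldots,T}\sum_{t=1}^k \pi_{[t:T]}/\pi_{[1:k]}$ as claimed. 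Combining this with the enough-capacity hypothesis $\inf_h \mathbf{R}_{D_{[1:k]}^{\alpha_{[1:k]}}}(h)=0$ identifies the upper bound with the excess risk, giving Definition~\ref{def:full_obs_open_learn}.

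The main obstacle is the interaction between arg-max aggregation and OOD detection. A sample drawn from task $j$ is OOD to every $h_t$ with $t\neq j$, so an unconstrained $h_t$ could assign it a spuriously high in-distribution score and corrupt the arg-max. The augmented-OOD construction of step one is precisely what neutralises this threat: by injecting the other tasks' data into the OOD training signal of $h_t$, we force $h_t$ to suppress its in-distribution scores on off-task inputs, so only the head corresponding to the sample's true task can win the arg-max. Beyond this conceptual point, the mixture-weight accounting that recovers the exact multiplicative factor is delicate but essentially mechanical once the construction of $\tilde O_t$ is fixed.
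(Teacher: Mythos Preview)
Your high-level skeleton is right---instantiate Definition~\ref{def:part_obs_open_learn} with a cleverly chosen per-task OOD, aggregate the resulting $h_t$'s into a single predictor, and use the loss assumption to decompose the target risk. But two concrete choices diverge from the paper in ways that do not recover the stated bound.

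First, the paper does \emph{not} aggregate by a symmetric $\arg\max$ over all task scores. It uses an order-dependent cascade: $\tilde h_k(x)=y^{i,j}$ if $h_{i'}(x)=y^{i',o}$ for all $i'<i$ and $h_i(x)=y^{i,j}$, i.e.\ try $h_1,h_2,\dots$ in order and stop at the first non-OOD verdict. The hypothesis Eq.~\ref{eq:loss_assump} is phrased literally for this cascade; your reading of it as $\mathbf{R}_{D_i^{\alpha_i}}(h)\le\sum_t \mathbf{R}_{D_i^{\alpha_i}}(h_t)$ is not what the inequality says---its right-hand side involves $l(h_t(x),y^{t,o})$, the loss of $h_t$ against the task-$t$ OOD label, not the risk of $h_t$ under $D_i^{\alpha_i}$ with task-$i$ labels. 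With an $\arg\max$ rule, Eq.~\ref{eq:loss_assump} as stated does not apply.

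Second, and tied to this, your $\tilde O_t$ mixes in \emph{all} other tasks $\{D_j:j\neq t\}$. The paper's $O'_t$ mixes in only \emph{future} tasks $i>t$, with normalizer $\pi_{[t:T]}$. This asymmetry is exactly what produces the factor $\pi_{[t:T]}$ in the multiplier: under the cascade, a sample from $D_i$ incurs losses only from $h_1,\dots,h_i$, each of which sees $D_i$ as a future-task component of its OOD, so the change-of-measure constant is $\pi_{[t:T]}/\pi_{[1:k]}$. Your symmetric construction has normalizer $\pi_{[1:T]}$ and, once the bookkeeping is done honestly, yields the multiplier of Corollary~\ref{thm:def4_imply_def3_replay}, $\max_k k\,\pi_{[1:T]}/\pi_{[1:k]}$, not the one in Theorem~\ref{thm:def4_imply_def3}. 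In effect you have sketched the replay/order-free variant (Corollary~\ref{thm:def4_imply_def3_replay}); to prove Theorem~\ref{thm:def4_imply_def3} you need the cascade aggregation together with the future-tasks-only choice of $O'_t$.
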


The proof is given in Appendix~\ref{sec.proof}. \textbf{Theorem}~\ref{thm:def4_imply_def3} connects \textbf{Definitions}~\ref{def:part_obs_open_learn} to  \ref{def:full_obs_open_learn} and \textbf{Theorem}~\ref{thm:def3_imply_def1} connects \textbf{Definitions}~\ref{def:full_obs_open_learn} to \ref{def:full_obs_closed_learn}, which is the desirable property of CIL. When all tasks have the same weight $\pi_1 = \dots = \pi_T = 1/T$, the multiplier $\max_{k=1, \dots, T} \sum_{t=1}^k \frac{\pi_{[t:T]}}{\pi_{[1:k]}} = T$, which is positively correlated with the number of tasks.

Though \textbf{Theorem} \ref{thm:def4_imply_def3} gives an upper bound to induce \textbf{Definition} \ref{def:full_obs_open_learn} from \textbf{Definition} \ref{def:part_obs_open_learn}, 
the hypothesis function that satisfies \textbf{Definition} \ref{def:full_obs_open_learn} is recursively derived from the previous tasks (see the proof). 
We can also observe that when tasks have different weights, the multiplier $\max_{k=1, \dots, T} \sum_{t=1}^k \frac{\pi_{[t:T]}}{\pi_{[1:k]}}$ depends on the order of tasks.
It is undesirable that the hypothesis function depends on the order of tasks.
When we can acquire some replay data of past tasks and treat them as OOD data, we have the following corollary that gives an order-free hypothesis function. 

\begin{corollary}
\label{thm:def4_imply_def3_replay}
For a set of distributions $\mathcal{D}$ and a hypothesis function space $\mathcal{H}$, if \textbf{Definition} \ref{def:part_obs_open_learn} holds, $\mathcal{H}$ enjoys enough capacity i.e. $\inf_{h \in \mathcal{H}} \mathbf{R}_{D_{[1:k]}^{\alpha_{[1: k]}}} (h) = 0$, and the loss function on all tasks is bounded by summation of the loss functions on every task i.e. Eq.~\ref{eq:loss_assump_replay} in Appendix, if we treat data of past tasks as OOD data, then \textbf{Definition} \ref{def:full_obs_open_learn} holds and the upper bound $\epsilon_n$ is multiplied by $\max_{k=1, \dots, T} \frac{k \pi_{[1:T]}}{\pi_{[1:k]}} $. 
\end{corollary}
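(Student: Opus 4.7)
The plan is to mirror the proof of Theorem~\ref{thm:def4_imply_def3} but exploit the replay buffer to remove the order-dependent, cascading construction. The key move is to redesignate what plays the role of ``OOD'' at every task: rather than letting unseen future tasks absorb the OOD budget (as in Theorem~\ref{thm:def4_imply_def3}), I instantiate the OOD component at task $k$ as a mixture of (i) the adversarial $O_{(X_k,Y_k)}$ appearing in the statement of Definition~\ref{def:full_obs_open_learn}, and (ii) the already-seen distributions $D_1,\dots,D_{k-1}$ accessed through replay. This is legitimate because Definition~\ref{def:part_obs_open_learn} quantifies over arbitrary $O\in\mathcal D$ and $\alpha\in[0,1)$; applying it produces, for each $k$, a hypothesis $h_k$ with $\mathbf{R}_{D_k^{\alpha_k^\star}}(h_k)<\epsilon_n$, where $\alpha_k^\star$ bundles both the genuine OOD mass and the replayed past.

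Next, I assemble a CIL hypothesis from the per-task $h_k$'s using the ring operations of $\mathcal H$ exactly as in Theorem~\ref{thm:def4_imply_def3}, and decompose the target compound risk as
\begin{equation*}
\mathbf{R}_{D_{[1:k]}^{\alpha_{[1:k]}}}(h_k)=\sum_{i=1}^{k}\frac{\pi_i}{\pi_{[1:k]}}\,\mathbf{R}_{D_i^{\alpha_i}}(h_k).
\end{equation*}
Invoking the summation-style loss bound Eq.~\ref{eq:loss_assump_replay} together with the non-forgetting assumption, each summand is controlled by the per-task excess risk, which is below $\epsilon_n$ by the first step. Because all $k$ tasks contribute in parallel rather than cascading forward through future OODs, the multiplier collapses from $\sum_{t=1}^{k}\pi_{[t:T]}/\pi_{[1:k]}$ (as in Theorem~\ref{thm:def4_imply_def3}) to $k\pi_{[1:T]}/\pi_{[1:k]}$, and the maximum over $k$ yields the stated upper bound.

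The main obstacle will be the bookkeeping around the conversion of scales: I must verify that the ratio $\alpha_k^\star/\alpha_i$ consolidates into the uniform $\pi_{[1:T]}/\pi_{[1:k]}$ prefactor rather than into a task-order-sensitive quantity, and that the capacity assumption $\inf_{h\in\mathcal H}\mathbf{R}_{D_{[1:k]}^{\alpha_{[1:k]}}}(h)=0$ transfers to the replay-augmented per-task OOD mixtures so that the zero-optimum used by Definition~\ref{def:part_obs_open_learn} continues to apply. Once these two pieces are aligned, the remaining argument tracks Theorem~\ref{thm:def4_imply_def3} almost verbatim, with its recursive hypothesis replaced by the parallel, order-free construction enabled by replay.
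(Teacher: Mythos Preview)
Your high-level plan tracks the paper, but the per-task OOD mixture you propose is too narrow to deliver the stated bound. You take the OOD at task $k$ to be only $O_{(X_k,Y_k)}$ together with the replayed $D_1,\dots,D_{k-1}$, explicitly dropping the future tasks that appeared in Theorem~\ref{thm:def4_imply_def3}. The paper does not drop them: it sets $\alpha'_k=1-\tfrac{\pi_k}{\pi_{[1:T]}}(1-\alpha_k)$ and
\[
O'_{(X_k,Y_k)}=\frac{\pi_k\alpha_k}{\pi_{[1:T]}}\,O_{(X_k,Y_k)}+\sum_{i\neq k}\frac{\pi_i}{\pi_{[1:T]}}\bigl[(1-\alpha_i)D_i+\alpha_i O_{(X_i,Y_i)}\bigr],
\]
so the OOD at task $k$ contains \emph{all} other $D_i$ (past \emph{and} future) and \emph{all} other adversarial $O_{(X_i,Y_i)}$. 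Both of your omissions matter. Eq.~\ref{eq:loss_assump_replay} bounds $l(\tilde h_k(x),y)$ on a sample $x\sim D_i$ by $\sum_{t\neq i}l(h_t(x),y^{t,o})+l(h_i(x),y^{i,j})$; for each summand with $t<i$ to be absorbed into $\mathbf{R}_{D_t^{\alpha'_t}}(h_t)$ the mixture $D_t^{\alpha'_t}$ must place mass on $D_i$, a task \emph{future} to $t$, which your replay-only OOD does not provide. Likewise, samples from $O_i$ with $i<k$ appear in the target risk $\mathbf{R}_{D_{[1:k]}^{\alpha_{[1:k]}}}$, and the second branch of Eq.~\ref{eq:loss_assump_replay} requires every $h_t$ to reject them; if $O_i$ is absent from $h_t$'s OOD for $t\neq i$, Definition~\ref{def:part_obs_open_learn} gives no control over those terms.

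The uniform coefficient $\pi_{[1:T]}/\pi_{[1:k]}$ in the multiplier arises precisely because the paper's $\alpha'_k$ normalizes by the full $\pi_{[1:T]}$; with only past data in the OOD mass the natural normalization involves $\pi_{[1:k]}$ (or something task-dependent), and the scale-conversion bookkeeping you flag as an obstacle will not close to the claimed constant. The fix is not to \emph{replace} the future-task OOD of Theorem~\ref{thm:def4_imply_def3} by replayed past, but to \emph{augment} it with the past $D_i$'s and all $O_{(X_i,Y_i)}$, which is exactly what the paper does. (Minor: in your displayed decomposition the hypothesis should be the assembled $\tilde h_k$, not the single-task $h_k$.)
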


The proof is given in Appendix~\ref{sec.proof}.

\section{Proposed Method} 
\label{sec.proposed}
The learnability in \textbf{Definition}~\ref{def:part_obs_open_learn} is defined over the OOD function of each task. By \textbf{Definition}~\ref{def:ood} of OOD, an OOD function is capable of classification (i.e., WP) for IND instances and rejection for OOD instances (or TP as it can be defined using OOD and such TP {differs from} OOD by a constant factor~\cite{kim2022theoretical}).
As discussed early, we use the masks in HAT~\cite{Serra2018overcoming} to protect each OOD model to ensure there is no forgetting. Following exactly this theoretical framework, an algorithm can be designed, which works quite well (see ROW (-WP) in Tab.~\ref{tab:ablation}). However, it is possible to do better by introducing a WP head so that we can use the OOD head for estimating only TP rather than for handling both WP and TP.

The proposed method ROW is a replay-based method. At each task $k$, the system receives dataset $D_k$ and leverages the replay data saved from previous tasks in the replay memory buffer $\mathcal{M}$ as the OOD data of the task to train an OOD detection head and also to fine-tune the WP head. 
Specifically, the model of each task has two heads: one for OOD (for computing TP) and one for WP. That is, we optimize the set of parameters $(\Psi_k, \theta_k, \phi_k)$, where $\Psi_k$ is the parameter set of the feature extractor $f_k$, $\theta_k$ is the parameter set of OOD head $h_k$, and $\phi_k$ is the parameter set of the WP head (i.e., classifier) $g_k$. The two task specific heads $h_k$ and $g_k$ receive {feature $u$} from the shared feature extractor $f_k$ and produce WP and TP probabilities, respectively. The training consists of three steps: 1) training the feature extractor $f_k$ and the OOD head $h_k$ using both IND instances in $D_k$ and OOD instances in $\mathcal{M}$ (i.e., the replay data), 2) fine-tuning a WP head $g_k$ for the task using $D_k$ based on only the fixed feature extractor $f_k$, and 3) fine-tuning the OOD heads of all tasks that have been learned so far. Training steps 2 and 3 are fast as both are simply fine-tuning the single layer of the classifiers (details below). 
The outputs from the two heads are used to compute the final CIL prediction probability in Eq.~\ref{eq:cil_in_til_and_tp}. An overview of the training and prediction process is illustrated in Fig.~\ref{fig:diagram}.

\begin{figure}
    \centering

\subfigure[Step 1]{

\tikzset{every picture/.style={line width=0.75pt}} 

\begin{tikzpicture}[x=0.65pt,y=0.65pt,yscale=-1,xscale=1]

\draw  [fill={rgb, 255:red, 121; green, 209; blue, 18 }  ,fill opacity=0.33 ] (256,98) -- (288,98) -- (288,183.8) -- (256,183.8) -- cycle ;
\draw  [fill={rgb, 255:red, 245; green, 166; blue, 35 }  ,fill opacity=0.56 ] (299,132.67) -- (331,132.67) -- (331,145.67) -- (299,145.67) -- cycle ;
\draw  [fill={rgb, 255:red, 74; green, 144; blue, 226 }  ,fill opacity=1 ] (341,133.33) -- (345.5,133.33) -- (345.5,142.67) -- (341,142.67) -- cycle ;
\draw    (288.5,135.79) -- (294.08,135.79) ;
\draw [shift={(297.08,135.79)}, rotate = 180] [fill={rgb, 255:red, 0; green, 0; blue, 0 }  ][line width=0.08]  [draw opacity=0] (3.57,-1.72) -- (0,0) -- (3.57,1.72) -- cycle    ;
\draw    (247.75,133.04) -- (252.67,133.04) ;
\draw [shift={(255.67,133.04)}, rotate = 180] [fill={rgb, 255:red, 0; green, 0; blue, 0 }  ][line width=0.08]  [draw opacity=0] (3.57,-1.72) -- (0,0) -- (3.57,1.72) -- cycle    ;
\draw  [fill={rgb, 255:red, 208; green, 2; blue, 27 }  ,fill opacity=1 ] (341,146.33) -- (345.5,146.33) -- (345.5,142.67) -- (341,142.67) -- cycle ;
\draw    (247.75,147.29) -- (252.67,147.29) ;
\draw [shift={(256.17,147.29)}, rotate = 180] [fill={rgb, 255:red, 0; green, 0; blue, 0 }  ][line width=0.08]  [draw opacity=0] (3.57,-1.72) -- (0,0) -- (3.57,1.72) -- cycle    ;
\draw    (288.5,142.79) -- (294.08,142.79) ;
\draw [shift={(297.08,142.79)}, rotate = 180] [fill={rgb, 255:red, 0; green, 0; blue, 0 }  ][line width=0.08]  [draw opacity=0] (3.57,-1.72) -- (0,0) -- (3.57,1.72) -- cycle    ;
\draw    (331.67,136.54) -- (337.5,136.51) ;
\draw [shift={(340.5,136.5)}, rotate = 179.72] [fill={rgb, 255:red, 0; green, 0; blue, 0 }  ][line width=0.08]  [draw opacity=0] (3.57,-1.72) -- (0,0) -- (3.57,1.72) -- cycle    ;
\draw    (331.7,143.54) -- (337.17,143.54) ;
\draw [shift={(340.17,143.54)}, rotate = 180] [fill={rgb, 255:red, 0; green, 0; blue, 0 }  ][line width=0.08]  [draw opacity=0] (3.57,-1.72) -- (0,0) -- (3.57,1.72) -- cycle    ;

\draw (237.3,129.78) node [anchor=north west][inner sep=0.75pt]  [font=\tiny]  {$x$};
\draw (268.42,137.01) node [anchor=north west][inner sep=0.75pt]  [font=\tiny]  {$f$};
\draw (310.00,133.9) node [anchor=north west][inner sep=0.75pt]  [font=\tiny]  {$h_{k}$};
\draw (237.32,146.28) node [anchor=north west][inner sep=0.75pt]  [font=\tiny]  {$x_{o}$};

\end{tikzpicture}
\label{fig:diagram_a}
}
\qquad
\subfigure[Step 2]{

\tikzset{every picture/.style={line width=0.75pt}} 

\begin{tikzpicture}[x=0.65pt,y=0.65pt,yscale=-1,xscale=1]

\draw    (422.75,133.79) -- (427.67,133.79) ;
\draw [shift={(430.67,133.79)}, rotate = 180] [fill={rgb, 255:red, 0; green, 0; blue, 0 }  ][line width=0.08]  [draw opacity=0] (3.57,-1.72) -- (0,0) -- (3.57,1.72) -- cycle    ;
\draw  [fill={rgb, 255:red, 121; green, 209; blue, 18 }  ,fill opacity=0.33 ] (430.33,92.28) -- (462.33,92.28) -- (462.33,178.08) -- (430.33,178.08) -- cycle ;
\draw  [fill={rgb, 255:red, 245; green, 166; blue, 35 }  ,fill opacity=0.56 ] (472.56,130.9) -- (504.56,130.9) -- (504.56,140.32) -- (472.56,140.32) -- cycle ;
\draw  [fill={rgb, 255:red, 74; green, 144; blue, 226 }  ,fill opacity=1 ] (524.2,130.7) -- (528.7,130.7) -- (528.7,140.04) -- (524.2,140.04) -- cycle ;
\draw    (512.09,135.87) -- (515.84,135.87) ;
\draw [shift={(518.84,135.87)}, rotate = 180] [fill={rgb, 255:red, 0; green, 0; blue, 0 }  ][line width=0.08]  [draw opacity=0] (3.57,-1.72) -- (0,0) -- (3.57,1.72) -- cycle    ;
\draw    (462.71,135.79) -- (468.3,135.79) ;
\draw [shift={(471.3,135.79)}, rotate = 180] [fill={rgb, 255:red, 0; green, 0; blue, 0 }  ][line width=0.08]  [draw opacity=0] (3.57,-1.72) -- (0,0) -- (3.57,1.72) -- cycle    ;

\draw (414.07,129.03) node [anchor=north west][inner sep=0.75pt]  [font=\tiny]  {$x$};
\draw (443.75,131.92) node [anchor=north west][inner sep=0.75pt]  [font=\tiny]  {$f$};
\draw (484.27,131.05) node [anchor=north west][inner sep=0.75pt]  [font=\tiny]  {$g_{k}$};

\end{tikzpicture}
\label{fig:diagram_b}
}
\qquad
\subfigure[Step 3]{

\tikzset{every picture/.style={line width=0.75pt}} 

\begin{tikzpicture}[x=0.65pt,y=0.65pt,yscale=-1,xscale=1]

\draw  [fill={rgb, 255:red, 121; green, 209; blue, 18 }  ,fill opacity=0.33 ] (256,193.87) -- (288,193.87) -- (288,279.67) -- (256,279.67) -- cycle ;
\draw  [fill={rgb, 255:red, 74; green, 144; blue, 226 }  ,fill opacity=1 ] (341,209.75) -- (345.5,209.75) -- (345.5,219.08) -- (341,219.08) -- cycle ;
\draw  [fill={rgb, 255:red, 208; green, 2; blue, 27 }  ,fill opacity=1 ] (341,222.75) -- (345.5,222.75) -- (345.5,219.08) -- (341,219.08) -- cycle ;
\draw  [fill={rgb, 255:red, 74; green, 144; blue, 226 }  ,fill opacity=1 ] (341,254.83) -- (345.5,254.83) -- (345.5,264.17) -- (341,264.17) -- cycle ;
\draw  [fill={rgb, 255:red, 208; green, 2; blue, 27 }  ,fill opacity=1 ] (341,267.83) -- (345.5,267.83) -- (345.5,264.17) -- (341,264.17) -- cycle ;
\draw    (247.75,228.04) -- (252.67,228.04) ;
\draw [shift={(254,228.04)}, rotate = 180] [fill={rgb, 255:red, 0; green, 0; blue, 0 }  ][line width=0.08]  [draw opacity=0] (3.57,-1.72) -- (0,0) -- (3.57,1.72) -- cycle    ;
\draw    (247.75,242.29) -- (252.67,242.29) ;
\draw [shift={(254.5,242.29)}, rotate = 180] [fill={rgb, 255:red, 0; green, 0; blue, 0 }  ][line width=0.08]  [draw opacity=0] (3.57,-1.72) -- (0,0) -- (3.57,1.72) -- cycle    ;
\draw    (288.5,213.46) -- (294.08,213.46) ;
\draw [shift={(297.08,213.46)}, rotate = 180] [fill={rgb, 255:red, 0; green, 0; blue, 0 }  ][line width=0.08]  [draw opacity=0] (3.57,-1.72) -- (0,0) -- (3.57,1.72) -- cycle    ;
\draw    (288.5,220.46) -- (294.08,220.46) ;
\draw [shift={(297.08,220.46)}, rotate = 180] [fill={rgb, 255:red, 0; green, 0; blue, 0 }  ][line width=0.08]  [draw opacity=0] (3.57,-1.72) -- (0,0) -- (3.57,1.72) -- cycle    ;
\draw    (288.5,258.13) -- (294.08,258.13) ;
\draw [shift={(297.08,258.13)}, rotate = 180] [fill={rgb, 255:red, 0; green, 0; blue, 0 }  ][line width=0.08]  [draw opacity=0] (3.57,-1.72) -- (0,0) -- (3.57,1.72) -- cycle    ;
\draw    (288.5,265.13) -- (294.08,265.13) ;
\draw [shift={(297.08,265.13)}, rotate = 180] [fill={rgb, 255:red, 0; green, 0; blue, 0 }  ][line width=0.08]  [draw opacity=0] (3.57,-1.72) -- (0,0) -- (3.57,1.72) -- cycle    ;
\draw  [fill={rgb, 255:red, 245; green, 166; blue, 35 }  ,fill opacity=0.56 ] (299,210) -- (331,210) -- (331,223) -- (299,223) -- cycle ;
\draw  [fill={rgb, 255:red, 245; green, 166; blue, 35 }  ,fill opacity=0.56 ] (299,254) -- (331,254) -- (331,267) -- (299,267) -- cycle ;
\draw    (331.7,213.21) -- (337.53,213.18) ;
\draw [shift={(340.53,213.17)}, rotate = 179.72] [fill={rgb, 255:red, 0; green, 0; blue, 0 }  ][line width=0.08]  [draw opacity=0] (3.57,-1.72) -- (0,0) -- (3.57,1.72) -- cycle    ;
\draw    (331.73,220.21) -- (337.2,220.21) ;
\draw [shift={(340.2,220.21)}, rotate = 180] [fill={rgb, 255:red, 0; green, 0; blue, 0 }  ][line width=0.08]  [draw opacity=0] (3.57,-1.72) -- (0,0) -- (3.57,1.72) -- cycle    ;
\draw    (331.67,257.88) -- (337.5,257.85) ;
\draw [shift={(340.5,257.83)}, rotate = 179.72] [fill={rgb, 255:red, 0; green, 0; blue, 0 }  ][line width=0.08]  [draw opacity=0] (3.57,-1.72) -- (0,0) -- (3.57,1.72) -- cycle    ;
\draw    (331.7,264.88) -- (337.17,264.88) ;
\draw [shift={(340.17,264.88)}, rotate = 180] [fill={rgb, 255:red, 0; green, 0; blue, 0 }  ][line width=0.08]  [draw opacity=0] (3.57,-1.72) -- (0,0) -- (3.57,1.72) -- cycle    ;

\draw (269.08,233.51) node [anchor=north west][inner sep=0.75pt]  [font=\tiny]  {$f$};
\draw (320.5,227) node [anchor=north west][inner sep=0.75pt]  [rotate=-90]  {$\cdots $};
\draw (237.3,224.78) node [anchor=north west][inner sep=0.75pt]  [font=\tiny]  {$x$};
\draw (237.3,241.28) node [anchor=north west][inner sep=0.75pt]  [font=\tiny]  {$x_{o}$};
\draw (310.00,211.23) node [anchor=north west][inner sep=0.75pt]  [font=\tiny]  {$h_{k}$};
\draw (310.00,255.23) node [anchor=north west][inner sep=0.75pt]  [font=\tiny]  {$h_{k}$};

\end{tikzpicture}
\label{fig:diagram_c}
}
\qquad
\subfigure[Inference]{

\tikzset{every picture/.style={line width=0.75pt}} 

\begin{tikzpicture}[x=0.65pt,y=0.65pt,yscale=-1,xscale=1]

\draw    (462.75,193.67) -- (468.08,185.74) ;
\draw [shift={(469.75,183.25)}, rotate = 123.9] [fill={rgb, 255:red, 0; green, 0; blue, 0 }  ][line width=0.08]  [draw opacity=0] (3.57,-1.72) -- (0,0) -- (3.57,1.72) -- cycle    ;
\draw    (422.25,208.8) -- (427.17,208.8) ;
\draw [shift={(430.17,208.8)}, rotate = 180] [fill={rgb, 255:red, 0; green, 0; blue, 0 }  ][line width=0.08]  [draw opacity=0] (3.57,-1.72) -- (0,0) -- (3.57,1.72) -- cycle    ;
\draw    (463.25,224) -- (469.53,229.73) ;
\draw [shift={(471.75,231.75)}, rotate = 222.36] [fill={rgb, 255:red, 0; green, 0; blue, 0 }  ][line width=0.08]  [draw opacity=0] (3.57,-1.72) -- (0,0) -- (3.57,1.72) -- cycle    ;
\draw  [fill={rgb, 255:red, 121; green, 209; blue, 18 }  ,fill opacity=0.33 ] (430.3,167.54) -- (462.3,167.54) -- (462.3,253.34) -- (430.3,253.34) -- cycle ;
\draw  [fill={rgb, 255:red, 245; green, 166; blue, 35 }  ,fill opacity=0.56 ] (472.6,170.34) -- (504.6,170.34) -- (504.6,179.75) -- (472.6,179.75) -- cycle ;
\draw  [fill={rgb, 255:red, 245; green, 166; blue, 35 }  ,fill opacity=0.56 ] (472.6,183.34) -- (504.6,183.34) -- (504.6,196.34) -- (472.6,196.34) -- cycle ;
\draw  [fill={rgb, 255:red, 74; green, 144; blue, 226 }  ,fill opacity=1 ] (513.15,183.42) -- (517.65,183.42) -- (517.65,192.75) -- (513.15,192.75) -- cycle ;
\draw  [fill={rgb, 255:red, 208; green, 2; blue, 27 }  ,fill opacity=1 ] (513.15,196.42) -- (517.65,196.42) -- (517.65,192.75) -- (513.15,192.75) -- cycle ;
\draw  [fill={rgb, 255:red, 74; green, 144; blue, 226 }  ,fill opacity=1 ] (513.15,170.42) -- (517.65,170.42) -- (517.65,179.75) -- (513.15,179.75) -- cycle ;
\draw  [fill={rgb, 255:red, 74; green, 144; blue, 226 }  ,fill opacity=1 ] (524.15,178.17) -- (528.65,178.17) -- (528.65,187.5) -- (524.15,187.5) -- cycle ;
\draw    (505.48,175.5) -- (509.23,175.5) ;
\draw [shift={(512.2,175.5)}, rotate = 180] [fill={rgb, 255:red, 0; green, 0; blue, 0 }  ][line width=0.08]  [draw opacity=0] (3.57,-1.72) -- (0,0) -- (3.57,1.72) -- cycle    ;
\draw    (505.45,189.75) -- (509.2,189.75) ;
\draw [shift={(512.2,189.75)}, rotate = 180] [fill={rgb, 255:red, 0; green, 0; blue, 0 }  ][line width=0.08]  [draw opacity=0] (3.57,-1.72) -- (0,0) -- (3.57,1.72) -- cycle    ;
\draw  [fill={rgb, 255:red, 245; green, 166; blue, 35 }  ,fill opacity=0.56 ] (472.6,222.09) -- (504.6,222.09) -- (504.6,231.5) -- (472.6,231.5) -- cycle ;
\draw  [fill={rgb, 255:red, 245; green, 166; blue, 35 }  ,fill opacity=0.56 ] (472.6,235.09) -- (504.6,235.09) -- (504.6,248.09) -- (472.6,248.09) -- cycle ;
\draw  [fill={rgb, 255:red, 74; green, 144; blue, 226 }  ,fill opacity=1 ] (513.18,235.17) -- (517.68,235.17) -- (517.68,244.5) -- (513.18,244.5) -- cycle ;
\draw  [fill={rgb, 255:red, 208; green, 2; blue, 27 }  ,fill opacity=1 ] (513.18,248.17) -- (517.68,248.17) -- (517.68,244.5) -- (513.18,244.5) -- cycle ;
\draw  [fill={rgb, 255:red, 74; green, 144; blue, 226 }  ,fill opacity=1 ] (513.18,222.17) -- (517.68,222.17) -- (517.68,231.5) -- (513.18,231.5) -- cycle ;
\draw  [fill={rgb, 255:red, 74; green, 144; blue, 226 }  ,fill opacity=1 ] (524.18,229.92) -- (528.68,229.92) -- (528.68,239.25) -- (524.18,239.25) -- cycle ;
\draw    (505.43,227) -- (509.18,227) ;
\draw [shift={(512.18,227)}, rotate = 180] [fill={rgb, 255:red, 0; green, 0; blue, 0 }  ][line width=0.08]  [draw opacity=0] (3.57,-1.72) -- (0,0) -- (3.57,1.72) -- cycle    ;
\draw    (505.48,241.5) -- (509.23,241.5) ;
\draw [shift={(512.23,241.5)}, rotate = 180] [fill={rgb, 255:red, 0; green, 0; blue, 0 }  ][line width=0.08]  [draw opacity=0] (3.57,-1.72) -- (0,0) -- (3.57,1.72) -- cycle    ;

\draw (414.1,205.45) node [anchor=north west][inner sep=0.75pt]  [font=\tiny]  {$x$};
\draw (443.25,205.18) node [anchor=north west][inner sep=0.75pt]  [font=\tiny]  {$f$};
\draw (483.53,184.65) node [anchor=north west][inner sep=0.75pt]  [font=\tiny]  {$h_{1}$};
\draw (484.32,170.48) node [anchor=north west][inner sep=0.75pt]  [font=\tiny]  {$g_{1}$};
\draw (496.27,198) node [anchor=north west][inner sep=0.75pt]  [rotate=-90]  {$\cdots $};
\draw (483.53,236.4) node [anchor=north west][inner sep=0.75pt]  [font=\tiny]  {$h_{k}$};
\draw (484.32,222.23) node [anchor=north west][inner sep=0.75pt]  [font=\tiny]  {$g_{k}$};

\end{tikzpicture}
\label{fig:diagram_d}
}
    \caption{
    Overview of training steps at task $k$ and inference. (a): the first step of training the feature extractor and OOD head for task $k$. The system receives both IND instance $x \in D_k$ and OOD instance $x_o \in \mathcal{M}$. The output has IND classes (in blue) and the OOD class or label (in red). (b): the second step of fine-tuning the WP head using the IND training data only. (c): fine-tuning all OOD heads using both IND and OOD instances. (d): inference/prediction. For a test instance $x$, obtain TP and WP probabilities, and compute the CIL probability as in Eq.~\ref{eq:cil_in_til_and_tp}. }
\label{fig:diagram}
\end{figure}
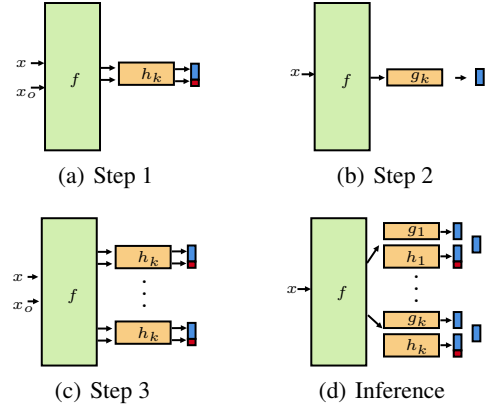

\textbf{1) Training Feature Extractor and OOD Head.} This step trains the OOD head $h_k$ for task $k$. Its feature extractor $f_k$ is also shared by the WP head (see below). An illustration of the training process is given in Fig.~\ref{fig:diagram_a}. Since OOD instances are any instances whose labels do not belong to task $k$, we leverage the task data $D_k$ as IND instances and the saved replay instances of tasks $k' \neq k$ in the memory buffer $\mathcal{M}$ as OOD instances represented by an OOD class (in red) in the OOD head. The network $h_k \circ f_k$ is trained to maximize the probability $p(y | x, k)=\text{softmax} h_k (f (x, k; \Psi_k); \theta_k)_y$ for an IND instance $x \in D_k$ and maximize the probability $p(ood | x, k)$ for OOD instance $x \in \mathcal{M}$. Formally, this is achieved by minimizing the sum of cross-entropy losses
\begin{align}
\small
\begin{split}
    \mathcal{L}_{ood}(\Psi_t, \theta_k) = &- \frac{1}{2N} \Big( \sum_{(x, y) \in D_k} \log p(y | x, k) \\ 
    &+ \sum_{(x, y) \in \mathcal{M}} \log p(ood | x, k) \Big) , \label{eq.feature_train}
\end{split}
\end{align}
where $N$ is the number of instances in $D_k$. 
We utilize upsampling with the replay instances to achieve an equal number of samples as the current task data $\mathcal{D}_k$.
The first loss is to discriminate the IND instances while the second loss is introduced to distinguish between IND and OOD instances.  

To deal with forgetting, we use the HAT method~\cite{Serra2018overcoming} (see Appendix~\ref{sec:hat}).

\textbf{2) Fine-Tuning the WP Head.}
Given the feature extractor trained in the first step, we fix the feature extractor and fine-tune the WP head $g_k$ (i.e., the WP classifier) using only $D_k$ by adjusting the parameters $\phi_k$. 
This is achieved by minimizing the cross-entropy loss
\begin{align}
	\mathcal{L}_{WP}(\phi_k) = - \frac{1}{N} \sum_{(x, y) \in D_k} \log p(y | x, k).
\end{align}
\textbf{WP probabilities} for the classes of task $k$ are just the output softmax probabilities.

\textbf{3) Fine-Tuning the OOD Heads of All Tasks.}
The OOD head $h_k$ built in step 1) is biased because for early tasks, where the instances in $\mathcal{M}$ are less diverse, the OOD heads for them will be weaker than the OOD heads of later tasks when the instances in $\mathcal{M}$ are more diverse. To mitigate this bias, we fine-tune all OOD heads of all tasks after training each task using only the replay data in $\mathcal{M}$. After training task $k$, we have $\mathcal{M}$ with replay instances of classes from task $1$ to $k$. For each task $k' \leq k$, reconstruct a new IND data $\tilde{D}_{k'}$ by selecting instances corresponding to task $k'$ from ${\mathcal{M}}$, and a new pseudo memory buffer $\tilde{\mathcal{M}}$ after removing the instances in $\tilde{D}_{k'}$. 
We then fine-tune every OOD head by minimizing the loss function
\begin{align}
\small
\begin{split}
	\mathcal{L}_{TP}(\theta_{k'}) - \frac{1}{M} \Big( \sum_{(x, y) \in \tilde{\mathcal{M}}} \log p(ood | x, k') \\
     + \sum_{(x, y) \in \tilde{D}_{k'}} \log p(y | x, k') \Big) \label{eq.obj_back_update}
\end{split}
\end{align}
where $M$ is $|\tilde{D}_{k}| + |\tilde{\mathcal{M}}|$.
Although the TP probability can be defined simply using the fine-tuned OOD heads, it can be further improved, which we discuss next.

\subsection{Distance-Based Coefficient} \label{sec.dist_coef}
We can further improve the performance by incorporating a distance-based coefficient defined at the feature level into the output from the OOD head. The intuition is based on the observation that samples identified as OOD using a score function defined at the feature level are not recognized with a score function defined in the output level, and vice versa~\cite{wang2022vim}. Their combination usually produces a better OOD detector.

After training task $k$, compute the means of the feature vectors per class of the task and the variance of the features. Denote the mean of class $y$ by $\mu_y$ and the variance by $\Sigma_k = \sum_y \Sigma_y$, where $\Sigma_y$ is the variance of features of class $y$. Using Mahalanobis distance (MD), the coefficient of an instance $x$ for task $k$ is 
\begin{align}
    c_k (x) = \max_y 1/MD(x ; \mu_y, \Sigma_k).
\end{align}
The coefficient is large if the feature of a test instance $x$ is close to one of the sample means of the task and small otherwise.

We finally define the \textbf{TP probability} for task $k$ as 
\begin{align}
    \mathbf{P}(X_k|x) = c_k (x) \max_{j} \text{softmax} (h_k(x))_j / Z,
\end{align}
where $Z$ is the normalizing factor and the maximum is taken over the softmax outputs of the IND classes $j$ obtained by the OOD head $h_k$. The $\max_j$ operation can also be seen as the maximum softmax probability score~\cite{hendrycks2016baseline}.

With the WP and TP probabilities, we now make a CIL prediction based on Eq.~\ref{eq:cil_in_til_and_tp}.

\section{Empirical Evaluation}
\textbf{Baselines.} We compare the proposed ROW\footnote{\url{https://github.com/k-gyuhak/CLOOD}} with 12 baselines. Five are exemplar-free (i.e., saving no previous data) methods and seven are replay-based methods. The exemplar-free methods are: \textbf{HAT}~\cite{Serra2018overcoming}, \textbf{OWM}~\cite{zeng2019continuous}, \textbf{SLDA}~\cite{hayes2020lifelong}, \textbf{PASS}~\cite{Zhu_2021_CVPR_pass}, and \textbf{L2P}~\cite{wang2022learning_l2p}. For the multi-head method HAT, we make prediction by taking $\arg\max$ over the concatenated logits from each task model as \cite{kim2022theoretical}. The replay methods are: \textbf{iCaRL}~\cite{Rebuffi2017}, \textbf{A-GEM}~\cite{Chaudhry2019ICLR}, \textbf{EEIL}~\cite{castro2018end},
\textbf{GD}~\citep{lee2019overcoming} without external data, 
\textbf{DER++}~\cite{NEURIPS2020_b704ea2c_derpp}, \textbf{HAL}~\citep{Chaudhry_Gordo_Dokania_Torr_Lopez-Paz_2021_hal}, and \textbf{MORE}~\cite{kim2022multi}. 

We could not make the recent system in \citep{Wu_2022_CVPR} using a pre-trained model as no code is released. We also do not include the existing parameter isolation methods that deal with CIL problems as they are very weak. 
HyperNet~\citep{von2019continual_hypernet} and PR~\citep{henning2021posterior} find the task-id via an entropy function and SupSup~\citep{supsup2020} finds it via gradient update. They then make a within-task prediction.
SupSup, PR, and iTAML~\citep{rajasegaran2020itaml} assume the test instances come in batches and all samples in a batch belong to one task. When tested per sample, 
HyperNet, SupSup, PR and iTAML achieve 22.4, 11.8, 45.2 and 33.5 on 10 tasks of CIFAR100, respectively, which are much lower than 51.4 of iCaRL. CCG~\citep{abati2020conditional} has no code. The systems 
in~\cite{kim2022theoretical} are also not included because they are quite weak as their contrastive learning does not work well with a pre-trained model.
The results reported in their paper based on ResNet-18 are also weaker than ROW.

\textbf{Datasets.} We use three popular continual learning benchmark datasets.
\textbf{1). CIFAR10}~\citep{Krizhevsky2009learning}. This is an image classification dataset consisting of 60,000 color images of size 32x32, among which 50,000 are training data and 10,000 are testing data. It has 10 different classes. 
\textbf{2). CIFAR100}~\citep{Krizhevsky2009learning}. This dataset consists of 50,000 training images and 10,000 testing images with 100 classes. Each image is colored and of size 32x32. 
\textbf{3). Tiny-ImageNet}~\citep{Le2015TinyIV}. This dataset has 200 classes with 500 training images of size 64x64 per class. The validation data has 50 samples per class. Since no label is provided for the test data, we use the validation set for testing as in~\citep{Zhu_2021_CVPR_pass}.

\textbf{Backbone Architecture and Pre-Training.}
We use the backbone architecture of transformer DeiT-S/16~\citep{touvron2021training_deit}. As pre-training models or feature extractors are increasingly used in all types of applications, including continual learning~\cite{wang2022learning_l2p,kim2022multi,Wu_2022_CVPR}, we also take this approach. Following~\cite{kim2022multi}, to ensure there is no information leak from pre-training to continual learning, the pre-trained model/network is trained using 611 classes of ImageNet
after removing 389 classes which are similar or identical to the classes of CIFAR and Tiny-ImageNet.
To leverage the strong performance of the pre-trained model while adapting to new knowledge, we fix the feature extractor and append trainable \textbf{adapter modules} of fully-connected networks with one hidden layer at each transformer layer~\citep{Houlsby2019Parameter} except SLDA and L2P.\footnote{For SLDA and L2P, we follow the original papers. SLDA fine-tunes only the classifier with a fixed feature extractor and L2P trains learnable prompts.} The number of neurons in each hidden layer is 64 for CIFAR10 and 128 for other datasets.
Note that \textbf{\textit{all baselines and ROW use the same architecture and the same pre-training model for fairness}} as using a pre-trained model improves the performance~\citep{kim2022multi,ostapenko2022foundational}. 

Note that we do not use the pre-trained models like CLIP~\citep{radford2021learning} or others trained using the full ImageNet data due to \textbf{information leak} both in terms of features and class labels because our experiment data have been used in training these pre-trained models. This leakage can seriously affect the results. For example, the L2P system  using the pre-training model trained using the full ImageNet data performs extremely well, but after those overlapping classes are removed in pre-training, its performances drop greatly. In Tab.~\ref{Tab:maintable}, we can see that it is in fact quite weak.

\textbf{Training Details.} 
For the replay-based methods, we follow the budget sizes of \cite{Rebuffi2017,NEURIPS2020_b704ea2c_derpp}.
For our method, we use the memory budget strategy~\cite{chaudhry2019continual} to save equal number of samples per class.
Denote the budget size by $|\mathcal{M}|$. 

For CIFAR10, we split the 10 classes into 5 tasks with 2 classes per task. 
We refer the experiment as C10-5T. The memory budget size $|\mathcal{M}|$ is 200.

For CIFAR100, we conduct two experiments. We split the 100 classes into 10 and 20 tasks, where each task has 10 classes and 5 classes, respectively. We refer the experiments as C100-10T and C100-20T. We choose $|\mathcal{M}| =$ 2,000.

For Tiny-ImageNet, we conduct two experiments. We split the 200 classes into 5 tasks with 40 classes per task and 10 tasks with 20 classes per task. We refer the experiments as T-5T and T-10T, respectively. We save 2,000 samples in total for both experiments.

Following the random class order protocol of the existing methods~\citep{Rebuffi2017,lee2019overcoming}, we randomly generate 5 different class orders for each experiment and report the average accuracy over the 5 random orders.

For all the experiments of our system,  we find a good set of learning rates and the number of epochs via validation data made of 10\% of the training data. The hyper-parameters of our system is reported in Appendix~\ref{sec.hyper_param}. For the baselines, we use the experiment setups as reported in their official papers. If we could not reproduce the result, we find the hyper-parameters via the validation set.

\textbf{Evaluation Metrics.} We use two metrics: average classification accuracy (ACA) and average forgetting rate. ACA after the last task $t$ is $\mathcal{A}_{t} = \sum_{i=1}^{t} A_{i}^{t}/t$, where $A_{i}$ is the accuracy of the model on task $i$th data after learning task $t$. The average forgetting rate after task $t$ is $\mathcal{F}_{t} = \sum_{i=1}^{t-1} A_{i}^{i} - A_{i}^{t}$ \cite{liu2020mnemonics}. This is also referred as backward transfer in other literature~\citep{Lopez2017gradient}. 

\subsection{Results and Comparison} \label{sec.results_comparison}
\begin{table}
\vspace{-2mm}
\caption{
Average classification accuracy after the final task. `-XT' means X number of tasks. Our system ROW and all baselines use the pre-trained network. The last 7 baselines are replay-based systems. The last column shows the average of each row. We highlight the best results in each column in bold.
}
\label{Tab:maintable}
\centering
\resizebox{1.\columnwidth}{!}{
\begin{tabular}{l c c c c c c }
\toprule
\multirow{1}{*}{Method}  & \multicolumn{1}{c}{C10-5T}  &  \multicolumn{1}{c}{C100-10T} &  \multicolumn{1}{c}{C100-20T} &  \multicolumn{1}{c}{T-5T} & \multicolumn{1}{c}{T-10T} & \multicolumn{1}{c}{Average}\\
\midrule
HAT         & 79.36\scalebox{0.5}{$\pm$5.16} & 68.99\scalebox{0.5}{$\pm$0.21} & 61.83\scalebox{0.5}{$\pm$0.62} & \textbf{65.85}\scalebox{0.5}{$\pm$0.60} & 62.05\scalebox{0.5}{$\pm$0.55} & 67.62 \\
OWM         & 41.69\scalebox{0.5}{$\pm$6.34} & 21.39\scalebox{0.5}{$\pm$3.18} & 16.98\scalebox{0.5}{$\pm$4.44} & 24.55\scalebox{0.5}{$\pm$2.48} & 17.52\scalebox{0.5}{$\pm$3.45} & 24.43 \\
SLDA        & 88.64\scalebox{0.5}{$\pm$0.05} & 67.82\scalebox{0.5}{$\pm$0.05} & 67.80\scalebox{0.5}{$\pm$0.05} & 57.93\scalebox{0.5}{$\pm$0.05} & 57.93\scalebox{0.5}{$\pm$0.06} & 68.02 \\
PASS        & 86.21\scalebox{0.5}{$\pm$1.10} & 68.90\scalebox{0.5}{$\pm$0.94} & 66.77\scalebox{0.5}{$\pm$1.18} & 61.03\scalebox{0.5}{$\pm$0.38} & 58.34\scalebox{0.5}{$\pm$0.42} & 68.25 \\
L2P         & 73.59\scalebox{0.5}{$\pm$4.15} & 61.72\scalebox{0.5}{$\pm$0.81} & 53.84\scalebox{0.5}{$\pm$1.59} & 59.12\scalebox{0.5}{$\pm$0.96} & 54.09\scalebox{0.5}{$\pm$1.14} & 60.47 \\
\hline
iCaRL       & 87.55\scalebox{0.5}{$\pm$0.99} & 68.90\scalebox{0.5}{$\pm$0.47} & 69.15\scalebox{0.5}{$\pm$0.99} & 53.13\scalebox{0.5}{$\pm$1.04} & 51.88\scalebox{0.5}{$\pm$2.36} & 66.12 \\ 
A-GEM       & 56.33\scalebox{0.5}{$\pm$7.77} & 25.21\scalebox{0.5}{$\pm$4.00} & 21.99\scalebox{0.5}{$\pm$4.01} & 30.53\scalebox{0.5}{$\pm$3.99} & 21.90\scalebox{0.5}{$\pm$5.52} & 36.89 \\
EEIL        & 82.34\scalebox{0.5}{$\pm$3.13} & 68.08\scalebox{0.5}{$\pm$0.51} & 63.79\scalebox{0.5}{$\pm$0.66} & 53.34\scalebox{0.5}{$\pm$0.54} & 50.38\scalebox{0.5}{$\pm$0.97} & 63.59 \\
GD          & 89.16\scalebox{0.5}{$\pm$0.53} & 64.36\scalebox{0.5}{$\pm$0.57} & 60.10\scalebox{0.5}{$\pm$0.74} & 53.01\scalebox{0.5}{$\pm$0.97} & 42.48\scalebox{0.5}{$\pm$2.53} & 61.82 \\
DER++       & 84.63\scalebox{0.5}{$\pm$2.91} & 69.73\scalebox{0.5}{$\pm$0.99} & 70.03\scalebox{0.5}{$\pm$1.46} & 55.84\scalebox{0.5}{$\pm$2.21} & 54.20\scalebox{0.5}{$\pm$3.28} & 66.89 \\
HAL         & 84.38\scalebox{0.5}{$\pm$2.70} & 67.17\scalebox{0.5}{$\pm$1.50} & 67.37\scalebox{0.5}{$\pm$1.45} & 52.80\scalebox{0.5}{$\pm$2.37} & 55.25\scalebox{0.5}{$\pm$3.60} & 65.39 \\
MORE         & 89.16\scalebox{0.5}{$\pm$0.96} & 70.23\scalebox{0.5}{$\pm$2.27} & 70.53\scalebox{0.5}{$\pm$1.09} & 64.97\scalebox{0.5}{$\pm$1.28} & 63.06\scalebox{0.5}{$\pm$1.26} & 71.59 \\
\hline
ROW        & \textbf{90.97}\scalebox{0.5}{$\pm$0.19} & \textbf{74.72}\scalebox{0.5}{$\pm$0.48} & \textbf{74.60}\scalebox{0.5}{$\pm$0.12} & 65.11\scalebox{0.5}{$\pm$1.97} & \textbf{63.21}\scalebox{0.5}{$\pm$2.53} & \textbf{73.72} \Tstrut \\
\bottomrule
\end{tabular}
}
\vspace{-3mm}
\end{table}

\textbf{Average Classification Accuracy.} 
Tab.~\ref{Tab:maintable} shows the average classification accuracy after the final task. The last column Average indicates the average performance of each method over the 5 experiments. Our proposed method ROW performs the best consistently. On average, ROW achieves 73.72\% while the best replay baseline (MORE) achieves 71.59\%. We observe that MORE is significantly better than the other baselines. This is because MORE actually builds an OOD model for each task, which is close to the proposed theory but less principled than ROW.

The baselines SLDA and L2P are proposed to leverage a strong pre-trained feature extractor in the original papers. SLDA freezes the feature extractor and only fine-tunes the classifier. It performs well for the simple experiment C10-5T, but is significantly poorer than ROW on all experiments. 
This is because the fixed feature extractor does not adapt to new knowledge. 
Our method updates the feature extractor via adapter modules to new knowledge and it is able to learn more complex problems.
L2P trains a set of prompt embeddings. In the original paper, it uses a feature extractor that was pre-trained with ImageNet-21k which already includes the classes of the continual learning (CL) evaluation datasets (information leak). After we remove the classes similar to the datasets used in CL, its performance drops dramatically (60.47\% on average over the 5 experiments) and much poorer than our ROW (73.72\% on average).

\vspace{-1mm}
We conduct additional experiments with the size of memory buffer reduced by half to show the effectiveness of our method. The new memory buffer sizes for CIFAR10, CIFAR100, and Tiny-ImageNet are 100, 1,000, and 1,000, respectively. Tab.~\ref{Tab:smaller_memory} shows that our method ROW experiences little reduction in performance whereas the other replay-based baselines suffer from significant performance reduction. On average over the 5 experiments, ROW achieves 72.70\% while previously with the original memory buffer, it achieved 73.72. In contrast, the second best baseline DER++ reduces from 66.89 to 62.16. MORE is also robust with small memory sizes, but its average accuracy is 71.44 which is still lower than ROW.
\begin{figure}
    \centering
    \includegraphics[width=2.2in]{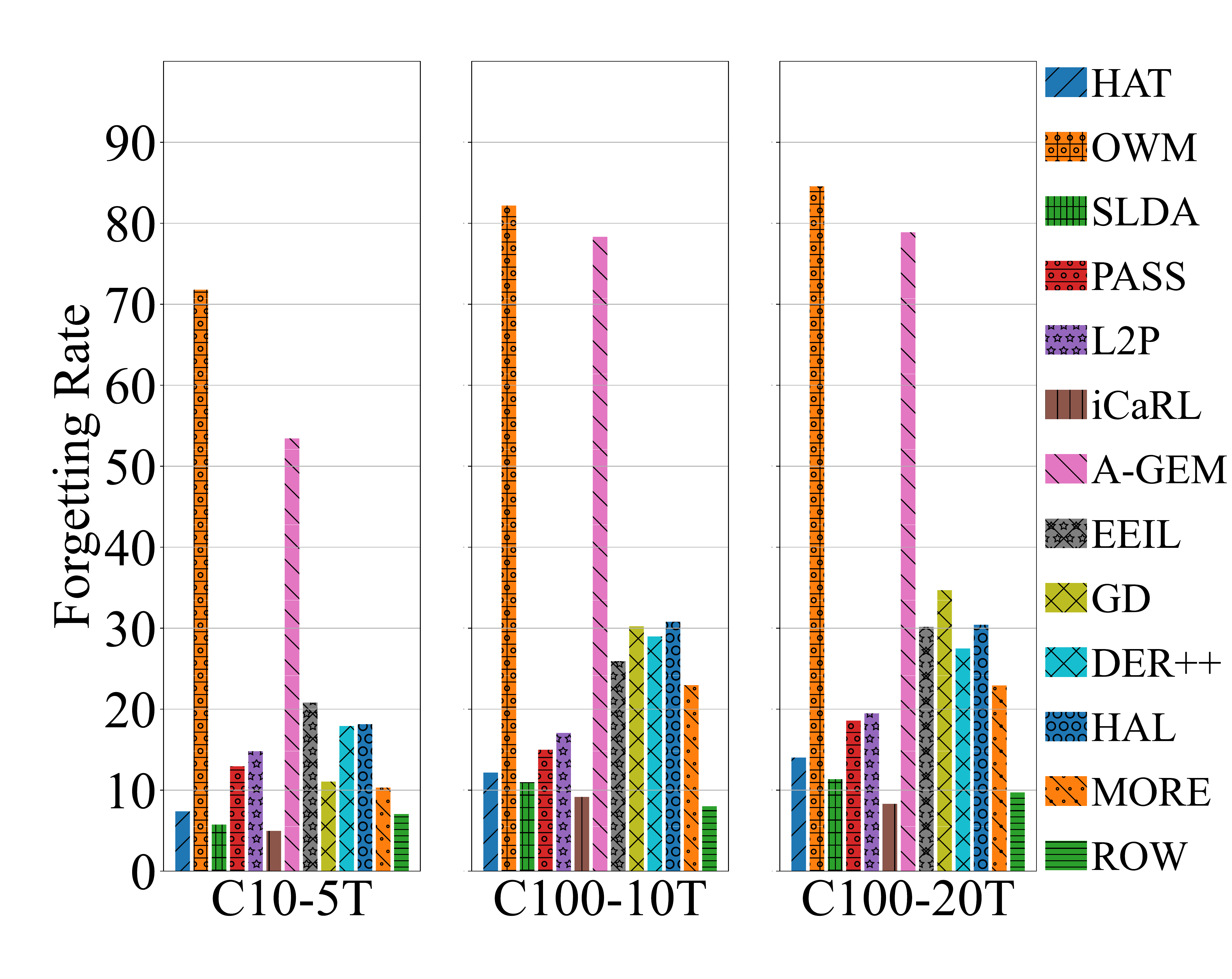}
    \vspace{-0.15in}
    \caption{Average forgetting rate (CIL). The lower the rate, the better the method is.}
    \label{fig:forgetting}
    \vspace{-2mm}
\end{figure}

\textbf{Average Forgetting Rate.} We compare the forgetting rate of each system after learning the last task in Fig.~\ref{fig:forgetting}. The forgetting rates of the proposed method ROW are 7.05, 7.99, and 9.72 on C10-5T, C100-10T and C100-20T, respectively. iCaRL forgets less than ours on C10-5T and C100-20T as it achieves 4.95 and 8.31, respectively. However, iCaRL was not able to adapt to new knowledge effectively as its accuracies are much lower than ROW on the same experiments. The forgetting rate of SLDA on C10-5T 5.74, but a similar observation can be made as iCaRL. 
The average accuracy over the 5 experiments of ROW is 73.72 while that of iCaRL and SLDA are only 66.12 and 68.02, respectively. According to the forgetting rates, the best baseline (MORE) adapts to the new knowledge well, but it was not able to retain the knowledge as effectively as ROW. Its forgetting rates are 10.30, 22.96, and 22.90 on C10-5T, C100-10T, and C100-20T, respectively, and are much larger than ours. This results in lower performance of MORE than ROW.

It is \textbf{important to note} that our system actually has no forgetting due to the CF prevention by HAT. 
The `\textit{forgetting}' occurs not because it forgets the task knowledge, but because the classification becomes harder with more classes.

\begin{table}[t]
\caption{
The classification accuracy of the replay-based baselines and our method ROW with smaller memory buffer sizes. The buffer sizes are reduced by half and the new sizes are: 100 for CIFAR10 and 1,000 for CIFAR100 and Tiny-ImageNet.
Numbers in bold are the best results in each column.
}
\vspace{0.1in}
\centering
\resizebox{1.0\columnwidth}{!}{
\begin{tabular}{l c c c c c c}
\toprule
\multirow{1}{*}{Method}  & \multicolumn{1}{c}{C10-5T}  &  \multicolumn{1}{c}{C100-10T} &  \multicolumn{1}{c}{C100-20T} &  \multicolumn{1}{c}{T-5T} & \multicolumn{1}{c}{T-10T} & \multicolumn{1}{c}{Avg.}\\
\midrule
iCaRL         & 86.08\scalebox{0.5}{$\pm$1.19}  &  66.96\scalebox{0.5}{$\pm$2.08} & 68.16\scalebox{0.5}{$\pm$0.71} & 47.27\scalebox{0.5}{$\pm$3.22} & 49.51\scalebox{0.5}{$\pm$1.87} & 63.60 \\ 
A-GEM         & 56.64\scalebox{0.5}{$\pm$4.29} & 23.18\scalebox{0.5}{$\pm$2.54} & 20.76\scalebox{0.5}{$\pm$2.88} & 31.44\scalebox{0.5}{$\pm$3.84} & 23.73\scalebox{0.5}{$\pm$6.27} & 31.15 \\ 
EEIL & 77.44\scalebox{0.5}{$\pm$3.04} & 62.95\scalebox{0.5}{$\pm$0.68} & 57.86\scalebox{0.5}{$\pm$0.74} & 48.36\scalebox{0.5}{$\pm$1.38} & 44.59\scalebox{0.5}{$\pm$1.72} & 58.24 \\
GD & 85.96\scalebox{0.5}{$\pm$1.64} & 57.17\scalebox{0.5}{$\pm$1.06} & 50.30\scalebox{0.5}{$\pm$0.58} & 46.09\scalebox{0.5}{$\pm$1.77} & 32.41\scalebox{0.5}{$\pm$2.75} & 54.39 \\
DER++         & 80.09\scalebox{0.5}{$\pm$3.00} & 64.89\scalebox{0.5}{$\pm$2.48} & 65.84\scalebox{0.5}{$\pm$1.46} & 50.74\scalebox{0.5}{$\pm$2.41} & 49.24\scalebox{0.5}{$\pm$5.01} & 62.16 \\
HAL           & 79.16\scalebox{0.5}{$\pm$4.56} & 62.65\scalebox{0.5}{$\pm$0.83} & 63.96\scalebox{0.5}{$\pm$1.49} & 48.17\scalebox{0.5}{$\pm$2.94} & 47.11\scalebox{0.5}{$\pm$6.00} & 60.21 \\
MORE  & 88.13\scalebox{0.5}{$\pm$1.16} & 71.69\scalebox{0.5}{$\pm$0.11} & 71.29\scalebox{0.5}{$\pm$0.55} & 64.17\scalebox{0.5}{$\pm$0.77} & 61.90\scalebox{0.5}{$\pm$0.90} & 71.44 \\ 
\hline
ROW & \textbf{89.70}\scalebox{0.5}{$\pm$1.54} & \textbf{73.63}\scalebox{0.5}{$\pm$0.12} & \textbf{71.86}\scalebox{0.5}{$\pm$0.07} & \textbf{65.42}\scalebox{0.5}{$\pm$0.55} & \textbf{62.87}\scalebox{0.5}{$\pm$0.53} & \textbf{72.70} \\
\bottomrule
\end{tabular}
}
\label{Tab:smaller_memory}
\vspace{-2mm}
\end{table}

\subsection{Ablation Experiments} \label{sec:ablation}
\begin{table}
\caption{
Performance gains with the proposed techniques. The method -WP indicates removing WP head and using only OOD head obtained in step 1). The method -MD indicates removing the distance-based coefficient.
}
\vspace{0.1in}
\centering
\resizebox{2.5in}{!}{
\begin{tabular}{l c c c}
\toprule
{} & \multicolumn{1}{c}{C10-5T} & \multicolumn{1}{c}{C100-10T} & \multicolumn{1}{c}{C100-20T}\\
\midrule
\multicolumn{1}{l}{ROW} & 90.97\scalebox{0.5}{$\pm$0.19} & 74.72\scalebox{0.5}{$\pm$0.48} & 74.60\scalebox{0.5}{$\pm$0.12} \\
\multicolumn{1}{l}{ROW (-WP)} & 88.50\scalebox{0.5}{$\pm$1.32} & 72.29\scalebox{0.5}{$\pm$0.90} & 71.97\scalebox{0.5}{$\pm$0.77}  \\
\multicolumn{1}{l}{ROW (-WP-MD)} & 84.06\scalebox{0.5}{$\pm$3.38} & 67.53\scalebox{0.5}{$\pm$1.73} & 65.85\scalebox{0.5}{$\pm$0.95}  \\
\bottomrule
\end{tabular}
}
\label{tab:ablation}
\vspace{-2mm}
\end{table}

We conduct an ablation study to measure the performance after each component is removed from ROW. We consider removing two components: WP head and the distance-based coefficient (MD) in Sec.~\ref{sec.dist_coef}.
The method without WP head (ROW (-WP)) simply uses the OOD head obtained from step 1) with Eq.~\ref{eq.feature_train}. This method makes the final prediction by taking $\arg\max$ over the concatenated logit values without the OOD label from each task network (i.e. OOD head). 

Tab.~\ref{tab:ablation} shows the average classification accuracy. 
The model after removing WP also works greatly as it already outperforms most of the baselines on C10-5T and outperforms the baselines on C100-10T and 20T. In other words, using OOD head constructed following the theoretical framework is effective.
The model is still functional after removing both components (WP and the distance-based coefficient by MD) as shown in the last row of the table (ROW (-WP-MD)).

\section{Conclusion}
To the best of our knowledge, there is still no reported study of learnability of class incremental learning (CIL). This paper performed such a study and showed that the CIL is learnable with some practically reasonable assumptions. A new CIL algorithm (called ROW) was also proposed based on the theory. Experimental results demonstrated that ROW outperforms strong baselines.

\section*{Acknowledgements}
The work of Gyuhak Kim and Bing Liu was supported in part by a research contract from KDDI and three NSF grants (IIS-1910424, IIS-1838770, and CNS-2225427).

\bibliography{icml2023}

\begin{thebibliography}{76}
\providecommand{\natexlab}[1]{#1}
\providecommand{\url}[1]{\texttt{#1}}
\expandafter\ifx\csname urlstyle\endcsname\relax
  \providecommand{\doi}[1]{doi: #1}\else
  \providecommand{\doi}{doi: \begingroup \urlstyle{rm}\Url}\fi

\bibitem[Abati et~al.(2020)Abati, Tomczak, Blankevoort, Calderara, Cucchiara,
  and Bejnordi]{abati2020conditional}
Abati, D., Tomczak, J., Blankevoort, T., Calderara, S., Cucchiara, R., and
  Bejnordi, E.
\newblock Conditional channel gated networks for task-aware continual learning.
\newblock In \emph{CVPR}, pp.\  3931--3940, 2020.

\bibitem[Ahn et~al.(2019)Ahn, Cha, Lee, and Moon]{ahn2019neurIPS}
Ahn, H., Cha, S., Lee, D., and Moon, T.
\newblock Uncertainty-based continual learning with adaptive regularization.
\newblock In \emph{NeurIPS}, 2019.

\bibitem[Aljundi et~al.(2017)Aljundi, Chakravarty, and
  Tuytelaars]{Aljundi2016expert}
Aljundi, R., Chakravarty, P., and Tuytelaars, T.
\newblock Expert gate: Lifelong learning with a network of experts.
\newblock In \emph{CVPR}, 2017.

\bibitem[Aljundi et~al.(2019)Aljundi, Belilovsky, Tuytelaars, Charlin, Caccia,
  Lin, and Caccia]{aljundi2019online}
Aljundi, R., Belilovsky, E., Tuytelaars, T., Charlin, L., Caccia, M., Lin, M.,
  and Caccia, L.
\newblock Online continual learning with maximal interfered retrieval.
\newblock In \emph{NeurIPS}, 2019.

\bibitem[Bennani et~al.(2020)Bennani, Doan, and
  Sugiyama]{bennani2020generalisation}
Bennani, M.~A., Doan, T., and Sugiyama, M.
\newblock Generalisation guarantees for continual learning with orthogonal
  gradient descent.
\newblock \emph{Lifelong Learning Workshop at the ICML}, 2020.

\bibitem[Buzzega et~al.(2020)Buzzega, Boschini, Porrello, Abati, and
  Calderara]{NEURIPS2020_b704ea2c_derpp}
Buzzega, P., Boschini, M., Porrello, A., Abati, D., and Calderara, S.
\newblock Dark experience for general continual learning: a strong, simple
  baseline.
\newblock In \emph{NeurIPS}, 2020.

\bibitem[Castro et~al.(2018)Castro, Mar{\'\i}n-Jim{\'e}nez, Guil, Schmid, and
  Alahari]{castro2018end}
Castro, F.~M., Mar{\'\i}n-Jim{\'e}nez, M.~J., Guil, N., Schmid, C., and
  Alahari, K.
\newblock End-to-end incremental learning.
\newblock In \emph{ECCV}, pp.\  233--248, 2018.

\bibitem[Cha et~al.(2021)Cha, Lee, and Shin]{Cha_2021_ICCV_co2l}
Cha, H., Lee, J., and Shin, J.
\newblock Co2l: Contrastive continual learning.
\newblock In \emph{ICCV}, 2021.

\bibitem[Chaudhry et~al.(2019{\natexlab{a}})Chaudhry, Ranzato, Rohrbach, and
  Elhoseiny]{Chaudhry2019ICLR}
Chaudhry, A., Ranzato, M., Rohrbach, M., and Elhoseiny, M.
\newblock Efficient lifelong learning with a-gem.
\newblock In \emph{ICLR}, 2019{\natexlab{a}}.

\bibitem[Chaudhry et~al.(2019{\natexlab{b}})Chaudhry, Rohrbach, Elhoseiny,
  Ajanthan, Dokania, Torr, and Ranzato]{chaudhry2019continual}
Chaudhry, A., Rohrbach, M., Elhoseiny, M., Ajanthan, T., Dokania, P.~K., Torr,
  P.~H., and Ranzato, M.
\newblock Continual learning with tiny episodic memories.
\newblock 2019{\natexlab{b}}.

\bibitem[Chaudhry et~al.(2020)Chaudhry, Khan, Dokania, and
  Torr]{chaudhry2020continual}
Chaudhry, A., Khan, N., Dokania, P.~K., and Torr, P. H.~S.
\newblock Continual learning in low-rank orthogonal subspaces, 2020.

\bibitem[Chaudhry et~al.(2021)Chaudhry, Gordo, Dokania, Torr, and
  Lopez-Paz]{Chaudhry_Gordo_Dokania_Torr_Lopez-Paz_2021_hal}
Chaudhry, A., Gordo, A., Dokania, P., Torr, P., and Lopez-Paz, D.
\newblock Using hindsight to anchor past knowledge in continual learning.
\newblock \emph{Proceedings of the AAAI Conference on Artificial Intelligence},
  35\penalty0 (8):\penalty0 6993--7001, May 2021.
\newblock URL \url{https://ojs.aaai.org/index.php/AAAI/article/view/16861}.

\bibitem[Chen \& Liu(2018)Chen and Liu]{chen2018lifelong}
Chen, Z. and Liu, B.
\newblock Lifelong machine learning.
\newblock \emph{Synthesis Lectures on Artificial Intelligence and Machine
  Learning}, 12\penalty0 (3):\penalty0 1--207, 2018.

\bibitem[Dhar et~al.(2019)Dhar, Singh, Peng, Wu, and Chellappa]{Dhar2019CVPR}
Dhar, P., Singh, R.~V., Peng, K., Wu, Z., and Chellappa, R.
\newblock Learning without memorizing.
\newblock In \emph{CVPR}, 2019.

\bibitem[Fang et~al.(2022)Fang, Li, Lu, Dong, Han, and Liu]{fang2022out}
Fang, Z., Li, Y., Lu, J., Dong, J., Han, B., and Liu, F.
\newblock Is out-of-distribution detection learnable?
\newblock \emph{aNeurIPS-2022}, 2022.

\bibitem[Guo et~al.(2022)Guo, Liu, and Zhao]{guo2022online}
Guo, Y., Liu, B., and Zhao, D.
\newblock Online continual learning through mutual information maximization.
\newblock In \emph{International Conference on Machine Learning}, pp.\
  8109--8126. PMLR, 2022.

\bibitem[Hayes \& Kanan(2020)Hayes and Kanan]{hayes2020lifelong}
Hayes, T.~L. and Kanan, C.
\newblock Lifelong machine learning with deep streaming linear discriminant
  analysis.
\newblock In \emph{CVPR Workshop on Continual Learning}, 2020.

\bibitem[Hendrycks \& Gimpel(2016)Hendrycks and Gimpel]{hendrycks2016baseline}
Hendrycks, D. and Gimpel, K.
\newblock A baseline for detecting misclassified and out-of-distribution
  examples in neural networks.
\newblock \emph{arXiv preprint arXiv:1610.02136}, 2016.

\bibitem[Henning et~al.(2021)Henning, Cervera, D'Angelo, Von~Oswald, Traber,
  Ehret, Kobayashi, Grewe, and Sacramento]{henning2021posterior}
Henning, C., Cervera, M., D'Angelo, F., Von~Oswald, J., Traber, R., Ehret, B.,
  Kobayashi, S., Grewe, B.~F., and Sacramento, J.
\newblock Posterior meta-replay for continual learning.
\newblock \emph{NeurIPS}, 34, 2021.

\bibitem[Hou et~al.(2019)Hou, Pan, Loy, Wang, and Lin]{hou2019learning}
Hou, S., Pan, X., Loy, C.~C., Wang, Z., and Lin, D.
\newblock Learning a unified classifier incrementally via rebalancing.
\newblock In \emph{CVPR}, pp.\  831--839, 2019.

\bibitem[Houlsby et~al.(2019)Houlsby, Giurgiu, Jastrzebski, Morrone,
  De~Laroussilhe, Gesmundo, Attariyan, and Gelly]{Houlsby2019Parameter}
Houlsby, N., Giurgiu, A., Jastrzebski, S., Morrone, B., De~Laroussilhe, Q.,
  Gesmundo, A., Attariyan, M., and Gelly, S.
\newblock Parameter-efficient transfer learning for nlp.
\newblock In \emph{International Conference on Machine Learning}, pp.\
  2790--2799. PMLR, 2019.

\bibitem[Hu et~al.(2019)Hu, Lin, Liu, Tao, Tao, Ma, Zhao, and
  Yan]{hu2019overcoming}
Hu, W., Lin, Z., Liu, B., Tao, C., Tao, Z., Ma, J., Zhao, D., and Yan, R.
\newblock Overcoming catastrophic forgetting for continual learning via model
  adaptation.
\newblock In \emph{ICLR}, 2019.

\bibitem[Hung et~al.(2019)Hung, Tu, Wu, Chen, Chan, and Chen]{hung2019compact}
Hung, C.-Y., Tu, C.-H., Wu, C.-E., Chen, C.-H., Chan, Y.-M., and Chen, C.-S.
\newblock Compacting, picking and growing for unforgetting continual learning.
\newblock In \emph{NeurIPS}, volume~32, 2019.

\bibitem[Kamra et~al.(2017)Kamra, Gupta, and Liu]{Kamra2017deep}
Kamra, N., Gupta, U., and Liu, Y.
\newblock {Deep Generative Dual Memory Network for Continual Learning}.
\newblock \emph{arXiv preprint arXiv:1710.10368}, 2017.

\bibitem[Ke et~al.(2020)Ke, Liu, and Huang]{ke2020continual}
Ke, Z., Liu, B., and Huang, X.
\newblock Continual learning of a mixed sequence of similar and dissimilar
  tasks.
\newblock In \emph{NeurIPS}, 2020.

\bibitem[Ke et~al.(2021)Ke, Liu, Ma, Xu, and Shu]{ke2021achieving}
Ke, Z., Liu, B., Ma, N., Xu, H., and Shu, L.
\newblock Achieving forgetting prevention and knowledge transfer in continual
  learning.
\newblock \emph{NeurIPS}, 2021.

\bibitem[Kemker \& Kanan(2018)Kemker and Kanan]{Kemker2018fearnet}
Kemker, R. and Kanan, C.
\newblock {FearNet: Brain-Inspired Model for Incremental Learning}.
\newblock In \emph{ICLR}, 2018.

\bibitem[Kim \& Liu(2020)Kim and Liu]{kim2020continual}
Kim, G. and Liu, B.
\newblock Continual learning via principal components projection, 2020.
\newblock URL \url{https://openreview.net/forum?id=SkxlElBYDS}.

\bibitem[Kim et~al.(2022{\natexlab{a}})Kim, Ke, and Liu]{kim2022multi}
Kim, G., Ke, Z., and Liu, B.
\newblock A multi-head model for continual learning via out-of-distribution
  replay.
\newblock In \emph{Conference on Lifelong Learning Agents}, pp.\  548--563.
  PMLR, 2022{\natexlab{a}}.

\bibitem[Kim et~al.(2022{\natexlab{b}})Kim, Xiao, Konishi, Ke, and
  Liu]{kim2022theoretical}
Kim, G., Xiao, C., Konishi, T., Ke, Z., and Liu, B.
\newblock A theoretical study on solving continual learning.
\newblock \emph{NeurIPS-2022}, 2022{\natexlab{b}}.

\bibitem[Kim et~al.(2023)Kim, Xiao, Konishi, Ke, and Liu]{kim2023open}
Kim, G., Xiao, C., Konishi, T., Ke, Z., and Liu, B.
\newblock Open-world continual learning: Unifying novelty detection and
  continual learning.
\newblock \emph{arXiv:2304.10038 [cs.LG]}, 2023.

\bibitem[Kirkpatrick et~al.(2017)Kirkpatrick, Pascanu, Rabinowitz, Veness,
  Desjardins, Rusu, Milan, Quan, Ramalho, Grabska-Barwinska,
  et~al.]{Kirkpatrick2017overcoming}
Kirkpatrick, J., Pascanu, R., Rabinowitz, N., Veness, J., Desjardins, G., Rusu,
  A.~A., Milan, K., Quan, J., Ramalho, T., Grabska-Barwinska, A., et~al.
\newblock Overcoming catastrophic forgetting in neural networks.
\newblock \emph{Proceedings of the national academy of sciences}, 114\penalty0
  (13):\penalty0 3521--3526, 2017.

\bibitem[Krizhevsky \& Hinton(2009)Krizhevsky and
  Hinton]{Krizhevsky2009learning}
Krizhevsky, A. and Hinton, G.
\newblock Learning multiple layers of features from tiny images.
\newblock \emph{Technical Report TR-2009, University of Toronto, Toronto.},
  2009.

\bibitem[Le \& Yang(2015)Le and Yang]{Le2015TinyIV}
Le, Y. and Yang, X.
\newblock Tiny imagenet visual recognition challenge, 2015.

\bibitem[Lee et~al.(2019)Lee, Lee, Shin, and Lee]{lee2019overcoming}
Lee, K., Lee, K., Shin, J., and Lee, H.
\newblock Overcoming catastrophic forgetting with unlabeled data in the wild.
\newblock In \emph{CVPR}, 2019.

\bibitem[Lee et~al.(2021)Lee, Goldt, and Saxe]{lee2021continual}
Lee, S., Goldt, S., and Saxe, A.
\newblock Continual learning in the teacher-student setup: Impact of task
  similarity.
\newblock In \emph{International Conference on Machine Learning}, pp.\
  6109--6119. PMLR, 2021.

\bibitem[Li \& Hoiem(2016)Li and Hoiem]{Li2016LwF}
Li, Z. and Hoiem, D.
\newblock {Learning Without Forgetting}.
\newblock In \emph{ECCV}, pp.\  614--629. Springer, 2016.

\bibitem[Lin et~al.(2022)Lin, Yang, Fan, and Zhang]{lin2022beyond}
Lin, S., Yang, L., Fan, D., and Zhang, J.
\newblock Beyond not-forgetting: Continual learning with backward knowledge
  transfer.
\newblock \emph{NeurIPS-2022}, 2022.

\bibitem[Liu et~al.(2023)Liu, Mazumder, Robertson, and Grigsby]{liu2023ai}
Liu, B., Mazumder, S., Robertson, E., and Grigsby, S.
\newblock Ai autonomy: Self-initiated open-world continual learning and
  adaptation.
\newblock \emph{AI Magazine}, 2023.

\bibitem[Liu et~al.(2020{\natexlab{a}})Liu, Parisot, Slabaugh, Jia, Leonardis,
  and Tuytelaars]{Liu2020}
Liu, Y., Parisot, S., Slabaugh, G., Jia, X., Leonardis, A., and Tuytelaars, T.
\newblock More classifiers, less forgetting: A generic multi-classifier
  paradigm for incremental learning.
\newblock In \emph{ECCV}, pp.\  699--716. Springer International Publishing,
  2020{\natexlab{a}}.
\newblock \doi{10.1007/978-3-030-58574-7_42}.
\newblock URL \url{https://doi.org/10.1007/978-3-030-58574-7_42}.

\bibitem[Liu et~al.(2020{\natexlab{b}})Liu, Su, Liu, Schiele, and
  Sun]{liu2020mnemonics}
Liu, Y., Su, Y., Liu, A.-A., Schiele, B., and Sun, Q.
\newblock Mnemonics training: Multi-class incremental learning without
  forgetting.
\newblock In \emph{CVPR}, 2020{\natexlab{b}}.

\bibitem[Liu et~al.(2021)Liu, Schiele, and Sun]{Liu2020AANets}
Liu, Y., Schiele, B., and Sun, Q.
\newblock Adaptive aggregation networks for class-incremental learning.
\newblock In \emph{CVPR}, 2021.

\bibitem[Lopez-Paz \& Ranzato(2017)Lopez-Paz and Ranzato]{Lopez2017gradient}
Lopez-Paz, D. and Ranzato, M.
\newblock {Gradient Episodic Memory for Continual Learning}.
\newblock In \emph{NeurIPS}, pp.\  6470--6479, 2017.

\bibitem[Mallya \& Lazebnik(2017)Mallya and Lazebnik]{Mallya2017packnet}
Mallya, A. and Lazebnik, S.
\newblock {PackNet: Adding Multiple Tasks to a Single Network by Iterative
  Pruning}.
\newblock \emph{arXiv preprint arXiv:1711.05769}, 2017.

\bibitem[McCloskey \& Cohen(1989)McCloskey and Cohen]{McCloskey1989}
McCloskey, M. and Cohen, N.~J.
\newblock {Catastrophic interference in connectionist networks: The sequential
  learning problem}.
\newblock In \emph{Psychology of learning and motivation}, volume~24, pp.\
  109--165. Elsevier, 1989.

\bibitem[Ostapenko et~al.(2019)Ostapenko, Puscas, Klein, Jahnichen, and
  Nabi]{ostapenko2019learning}
Ostapenko, O., Puscas, M., Klein, T., Jahnichen, P., and Nabi, M.
\newblock Learning to remember: A synaptic plasticity driven framework for
  continual learning.
\newblock In \emph{CVPR}, pp.\  11321--11329, 2019.

\bibitem[Ostapenko et~al.(2021)Ostapenko, Rodriguez, Caccia, and
  Charlin]{ostapenko2021continual}
Ostapenko, O., Rodriguez, P., Caccia, M., and Charlin, L.
\newblock Continual learning via local module composition.
\newblock \emph{Advances in Neural Information Processing Systems},
  34:\penalty0 30298--30312, 2021.

\bibitem[Ostapenko et~al.(2022)Ostapenko, Lesort, Rodr{\'\i}guez, Arefin,
  Douillard, Rish, and Charlin]{ostapenko2022foundational}
Ostapenko, O., Lesort, T., Rodr{\'\i}guez, P., Arefin, M.~R., Douillard, A.,
  Rish, I., and Charlin, L.
\newblock Continual learning with foundation models: An empirical study of
  latent replay.
\newblock \emph{Conference on Lifelong Learning Agents}, 2022.

\bibitem[Pentina \& Lampert(2014)Pentina and Lampert]{pentina2014pac}
Pentina, A. and Lampert, C.
\newblock A pac-bayesian bound for lifelong learning.
\newblock In \emph{International Conference on Machine Learning}, pp.\
  991--999. PMLR, 2014.

\bibitem[Radford et~al.(2021)Radford, Kim, Hallacy, Ramesh, Goh, Agarwal,
  Sastry, Askell, Mishkin, Clark, et~al.]{radford2021learning}
Radford, A., Kim, J.~W., Hallacy, C., Ramesh, A., Goh, G., Agarwal, S., Sastry,
  G., Askell, A., Mishkin, P., Clark, J., et~al.
\newblock Learning transferable visual models from natural language
  supervision.
\newblock \emph{arXiv preprint arXiv:2103.00020}, 2021.

\bibitem[Rajasegaran et~al.(2020{\natexlab{a}})Rajasegaran, Hayat, Khan, Khan,
  Shao, and Yang]{rajasegaran2020adaptive}
Rajasegaran, J., Hayat, M., Khan, S., Khan, F.~S., Shao, L., and Yang, M.-H.
\newblock An adaptive random path selection approach for incremental learning,
  2020{\natexlab{a}}.

\bibitem[Rajasegaran et~al.(2020{\natexlab{b}})Rajasegaran, Khan, Hayat, Khan,
  and Shah]{rajasegaran2020itaml}
Rajasegaran, J., Khan, S., Hayat, M., Khan, F.~S., and Shah, M.
\newblock itaml: An incremental task-agnostic meta-learning approach.
\newblock In \emph{CVPR}, 2020{\natexlab{b}}.

\bibitem[Rebuffi et~al.(2017)Rebuffi, Kolesnikov, and Lampert]{Rebuffi2017}
Rebuffi, S.-A., Kolesnikov, A., and Lampert, C.~H.
\newblock {iCaRL: Incremental classifier and representation learning}.
\newblock In \emph{CVPR}, pp.\  5533--5542, 2017.

\bibitem[Ritter et~al.(2018)Ritter, Botev, and Barber]{ritter2018online}
Ritter, H., Botev, A., and Barber, D.
\newblock Online structured laplace approximations for overcoming catastrophic
  forgetting.
\newblock In \emph{NeurIPS}, 2018.

\bibitem[Rolnick et~al.(2019)Rolnick, Ahuja, Schwarz, Lillicrap, and
  Wayne]{rolnick2019neurIPS}
Rolnick, D., Ahuja, A., Schwarz, J., Lillicrap, T.~P., and Wayne, G.
\newblock Experience replay for continual learning.
\newblock In \emph{NeurIPS}, 2019.

\bibitem[Rostami et~al.(2019)Rostami, Kolouri, and Pilly]{Rostami2019ijcai}
Rostami, M., Kolouri, S., and Pilly, P.~K.
\newblock Complementary learning for overcoming catastrophic forgetting using
  experience replay.
\newblock In \emph{IJCAI}, 2019.

\bibitem[Rusu et~al.(2016)Rusu, Rabinowitz, Desjardins, Soyer, Kirkpatrick,
  Kavukcuoglu, Pascanu, and Hadsell]{Rusu2016}
Rusu, A.~A., Rabinowitz, N.~C., Desjardins, G., Soyer, H., Kirkpatrick, J.,
  Kavukcuoglu, K., Pascanu, R., and Hadsell, R.
\newblock {Progressive neural networks}.
\newblock \emph{arXiv preprint arXiv:1606.04671}, 2016.

\bibitem[Schwarz et~al.(2018)Schwarz, Luketina, Czarnecki, Grabska-Barwinska,
  Teh, Pascanu, and Hadsell]{schwarz2018progress}
Schwarz, J., Luketina, J., Czarnecki, W.~M., Grabska-Barwinska, A., Teh, Y.~W.,
  Pascanu, R., and Hadsell, R.
\newblock Progress \& compress: A scalable framework for continual learning.
\newblock \emph{arXiv preprint arXiv:1805.06370}, 2018.

\bibitem[Seff et~al.(2017)Seff, Beatson, Suo, and Liu]{Seff2017continual}
Seff, A., Beatson, A., Suo, D., and Liu, H.
\newblock {Continual learning in generative adversarial nets}.
\newblock \emph{arXiv preprint arXiv:1705.08395}, 2017.

\bibitem[Serr{\`{a}} et~al.(2018)Serr{\`{a}}, Sur{\'{i}}s, Miron, and
  Karatzoglou]{Serra2018overcoming}
Serr{\`{a}}, J., Sur{\'{i}}s, D., Miron, M., and Karatzoglou, A.
\newblock {Overcoming catastrophic forgetting with hard attention to the task}.
\newblock In \emph{ICML}, 2018.

\bibitem[Shin et~al.(2017)Shin, Lee, Kim, and Kim]{Shin2017continual}
Shin, H., Lee, J.~K., Kim, J., and Kim, J.
\newblock {Continual learning with deep generative replay}.
\newblock In \emph{NIPS}, pp.\  2994--3003, 2017.

\bibitem[Touvron et~al.(2021)Touvron, Cord, Douze, Massa, Sablayrolles, and
  J{\'e}gou]{touvron2021training_deit}
Touvron, H., Cord, M., Douze, M., Massa, F., Sablayrolles, A., and J{\'e}gou,
  H.
\newblock Training data-efficient image transformers \& distillation through
  attention.
\newblock In \emph{International Conference on Machine Learning}, pp.\
  10347--10357. PMLR, 2021.

\bibitem[van~de Ven \& Tolias(2019)van~de Ven and Tolias]{van2019three}
van~de Ven, G.~M. and Tolias, A.~S.
\newblock Three scenarios for continual learning.
\newblock \emph{arXiv preprint arXiv:1904.07734}, 2019.

\bibitem[von Oswald et~al.(2020)von Oswald, Henning, Sacramento, and
  Grewe]{von2019continual_hypernet}
von Oswald, J., Henning, C., Sacramento, J., and Grewe, B.~F.
\newblock Continual learning with hypernetworks.
\newblock \emph{ICLR}, 2020.

\bibitem[Wang et~al.(2022{\natexlab{a}})Wang, Li, Feng, and Zhang]{wang2022vim}
Wang, H., Li, Z., Feng, L., and Zhang, W.
\newblock Vim: Out-of-distribution with virtual-logit matching.
\newblock In \emph{Proceedings of the IEEE/CVF Conference on Computer Vision
  and Pattern Recognition}, pp.\  4921--4930, 2022{\natexlab{a}}.

\bibitem[Wang et~al.(2022{\natexlab{b}})Wang, Zhang, Yang, Yu, Li, Hong, Zhang,
  Li, Zhong, and Zhu]{wang2022memory}
Wang, L., Zhang, X., Yang, K., Yu, L., Li, C., Hong, L., Zhang, S., Li, Z.,
  Zhong, Y., and Zhu, J.
\newblock Memory replay with data compression for continual learning.
\newblock \emph{Proceedings of International Conference on Learning
  Representations (ICLR)}, 2022{\natexlab{b}}.

\bibitem[Wang et~al.(2022{\natexlab{c}})Wang, Zhang, Lee, Zhang, Sun, Ren, Su,
  Perot, Dy, and Pfister]{wang2022learning_l2p}
Wang, Z., Zhang, Z., Lee, C.-Y., Zhang, H., Sun, R., Ren, X., Su, G., Perot,
  V., Dy, J., and Pfister, T.
\newblock Learning to prompt for continual learning.
\newblock In \emph{Proceedings of the IEEE/CVF Conference on Computer Vision
  and Pattern Recognition}, pp.\  139--149, 2022{\natexlab{c}}.

\bibitem[Wortsman et~al.(2020)Wortsman, Ramanujan, Liu, Kembhavi, Rastegari,
  Yosinski, and Farhadi]{supsup2020}
Wortsman, M., Ramanujan, V., Liu, R., Kembhavi, A., Rastegari, M., Yosinski,
  J., and Farhadi, A.
\newblock Supermasks in superposition.
\newblock In Larochelle, H., Ranzato, M., Hadsell, R., Balcan, M.~F., and Lin,
  H. (eds.), \emph{NeurIPS}, 2020.

\bibitem[Wu et~al.(2018)Wu, Herranz, Liu, van~de Weijer, Raducanu,
  et~al.]{wu2018memory}
Wu, C., Herranz, L., Liu, X., van~de Weijer, J., Raducanu, B., et~al.
\newblock Memory replay gans: Learning to generate new categories without
  forgetting.
\newblock In \emph{NeurIPS}, 2018.

\bibitem[Wu et~al.(2022)Wu, Swaminathan, Li, Ravichandran, Vasconcelos,
  Bhotika, and Soatto]{Wu_2022_CVPR}
Wu, T.-Y., Swaminathan, G., Li, Z., Ravichandran, A., Vasconcelos, N., Bhotika,
  R., and Soatto, S.
\newblock Class-incremental learning with strong pre-trained models.
\newblock In \emph{Proceedings of the IEEE/CVF Conference on Computer Vision
  and Pattern Recognition (CVPR)}, pp.\  9601--9610, June 2022.

\bibitem[Wu et~al.(2019)Wu, Chen, Wang, Ye, Liu, Guo, and Fu]{wu2019large}
Wu, Y., Chen, Y., Wang, L., Ye, Y., Liu, Z., Guo, Y., and Fu, Y.
\newblock Large scale incremental learning.
\newblock In \emph{CVPR}, 2019.

\bibitem[Xu \& Zhu(2018)Xu and Zhu]{xu2018reinforced}
Xu, J. and Zhu, Z.
\newblock Reinforced continual learning.
\newblock In \emph{NeurIPS}, 2018.

\bibitem[Yan et~al.(2021)Yan, Xie, and He]{yan2021dynamically}
Yan, S., Xie, J., and He, X.
\newblock Der: Dynamically expandable representation for class incremental
  learning.
\newblock In \emph{Proceedings of the IEEE/CVF Conference on Computer Vision
  and Pattern Recognition}, pp.\  3014--3023, 2021.

\bibitem[Zeng et~al.(2019)Zeng, Chen, Cui, and Yu]{zeng2019continuous}
Zeng, G., Chen, Y., Cui, B., and Yu, S.
\newblock Continuous learning of context-dependent processing in neural
  networks.
\newblock \emph{Nature Machine Intelligence}, 2019.

\bibitem[Zenke et~al.(2017)Zenke, Poole, and Ganguli]{Zenke2017continual}
Zenke, F., Poole, B., and Ganguli, S.
\newblock {Continual learning through synaptic intelligence}.
\newblock In \emph{ICML}, pp.\  3987--3995, 2017.

\bibitem[Zhu et~al.(2021)Zhu, Zhang, Wang, Yin, and Liu]{Zhu_2021_CVPR_pass}
Zhu, F., Zhang, X.-Y., Wang, C., Yin, F., and Liu, C.-L.
\newblock Prototype augmentation and self-supervision for incremental learning.
\newblock In \emph{CVPR}, 2021.

\end{thebibliography}
\bibliographystyle{icml2023}

\newpage
\appendix
\onecolumn

\textbf{APPENDIX}

\section{Proof}
\label{sec.proof}

\begin{proof}[Proof of \textbf{Theorem} \ref{thm:def2_not_imply_def1}]
\label{prf:def2_not_imply_def1}
Denote the algorithm that satisfies \textbf{Definition} \ref{def:part_obs_closed_learn} as $\textbf{A}_k^r$. 
Given any $\epsilon_n$ and $S \sim D_{[1:k]}$, denote $h_k = \textbf{A}_k^r (S)$. 
Based on $h_k$, we construct a $\Tilde{h}_k$ that satisfies \textbf{Definition} \ref{def:part_obs_closed_learn} but doesn't satisfy \textbf{Definition} \ref{def:full_obs_closed_learn}. 

We define $\mathcal{H}$ has the \textit{capacity} to learn more than one task as follows.
For any $h_0 \in \mathcal{H}$ that could only make correct predictions on a single task, but wrong predictions on all the other tasks, there exists $\delta > 0$ s.t. 
$$
\inf_{h \in \mathcal{H}} \mathbf{R}_{D_{[1:k]}} (h) < \mathbf{R}_{D_{[1:k]}} (h_0) - \delta.
$$

Denote $h_k = {\arg\max}_{i,j} \{\dots, z_k^{i,j},\dots\}$, where $z_k^{i,j} $ is the score function of the $j$-th class of the $i$-th task. 
Let $\sigma(z) = 1 / (1 + e^{-z})$ to be the sigmoid function. 
Define $$\Tilde{z}_k^{i,j}(x) = \sigma(z_k^{i,j}(x)) + i$$ and $$\Tilde{h}_k = \arg\max_{i,j} \{\dots, \Tilde{z}_k^{i,j},\dots\}.$$ 

\textbf{(i)} Since $0 < \sigma < 1$, we have $\Tilde{z}_k^{i,j} < \Tilde{z}_k^{i',j},\, \forall \, i < i'$. 
Therefore, 
$$\Tilde{h}_k ={\arg\max}_{i,j} \{\dots, \Tilde{z}_k^{i,j},\dots\}= {\arg\max}_{i=k,j} \{\dots, \Tilde{z}_k^{i,j},\dots\}. $$ 
Since $\sigma$ is monotonic increasing, we have
$$
\begin{aligned}
    \mathbf{R}_{D_{k}} (h_k) 
    &= \mathbb{E}_{(x, y)\sim D_k}[l({\arg\max}_{i=k, j} \{\dots, z_k^{i,j}(x),\dots\}, y)] \\
    &= \mathbb{E}_{(x, y)\sim D_k}[l({\arg\max}_{i=k, j} \{\dots, \Tilde{z}_k^{i,j}(x),\dots\}, y)] 
    = \mathbf{R}_{D_{k}} (\Tilde{h}_k). 
\end{aligned}
$$
Plugging $\mathbf{R}_{D_{k}} (h_k) = \mathbf{R}_{D_{k}} (\Tilde{h}_k) $ into \textbf{Definition} \ref{def:part_obs_closed_learn}, we have 
$$\max_{k=1, \dots, T} \mathbb{E}_{S \sim D_{[1:k]}} [\mathbf{R}_{D_{k}} (\Tilde{h}_k) - \inf_{h \in \mathcal{H}} \mathbf{R}_{D_{k}} (h)] < \epsilon_n .$$
Therefore, $\Tilde{h}_k$ also satisfies \textbf{Definition} \ref{def:part_obs_closed_learn}.

\textbf{(ii)} Since $\Tilde{h}_k$ always predicts the class of the $k$-th task, all predicted labels of samples from $D_{[1:k-1]}$ are wrong. 
Therefore, we have 
$$
    \max_{k=1, \dots, T} \mathbb{E}_{S \sim D_{[1:k]}} [\mathbf{R}_{D_{[1:k]}} (\Tilde{h}_k) - \inf_{h \in \mathcal{H}} \mathbf{R}_{D_{[1:k]}} (h)] > \delta > 0.
$$
Because $\delta$ is a constant that is irrelevant to $S \sim D_{[1:k]}$,
it cannot be reduced by increasing samples.  
Therefore, $\Tilde{h}_k$ doesn't satisfy \textbf{Definition} \ref{def:full_obs_closed_learn}.
\end{proof}

\clearpage

\begin{proof}[Proof of \textbf{Theorem} \ref{thm:def3_imply_def1}]
\label{prf:def3_imply_def1}
Denote the algorithm that satisfies \textbf{Definition} \ref{def:full_obs_open_learn} as $\textbf{A}_k$. 
Define $h_k = \textbf{A}_k (S)$. 
Let $\alpha_1 = \dots = \alpha_k = 0$, then we have $D_{[1:k]}^{\alpha_{[1: k]}} = D_{[1:k]}$.
It's obvious that $h_k$ satisfies \textbf{Definition} \ref{def:full_obs_closed_learn} because  
$$\mathbf{R}_{D_{[1:k]}} (h_k) - \inf_{h \in \mathcal{H}} \mathbf{R}_{D_{[1:k]}} (h) = \mathbf{R}_{D_{[1:k]}^{\alpha_{[1: k]}}} (h_k) - \inf_{h \in \mathcal{H}} \mathbf{R}_{D_{[1:k]}^{\alpha_{[1: k]}}} (h).$$
\end{proof}

\clearpage

\begin{proof}[Proof of \textbf{Theorem} \ref{thm:def4_imply_def3}]
\label{prf:def4_imply_def3}
Denote the algorithm that satisfies \textbf{Definition} \ref{def:part_obs_open_learn} as $\textbf{A}_k^r$. 
Given any $\epsilon_n$ and $S \sim D_{[1:k]}$, denote $h_k = \textbf{A}_k^r (S)$. 
Based on $h_k$, we construct a $\Tilde{h}_k$ that satisfies \textbf{Definition} \ref{def:full_obs_open_learn}.

For simplicity, we denote $\pi_{[k: k']} = \sum_{i=k}^{k'} \pi_i$.

For any $O_{(X_1, Y_1)}, \dots, O_{(X_T, Y_T)} \in \mathcal{D}$ and any $\alpha_1, \dots, \alpha_T \in [0, 1)$, let 
$$\alpha'_k = 1 - \frac{\pi_k}{\pi_{[k:T]}} (1 - \alpha_k)
$$
and 
$$O'_{(X_k, Y_k)} = \frac{\pi_k}{\pi_{[k:T]}} \alpha_k O_{(X_k, Y_k)} + \sum_{i=k+1}^{T} \frac{\pi_i}{\pi_{[k:T]}} [(1 - \alpha_i) D_i + \alpha_i O_{(X_i, Y_i)}].
$$
Defining
$$D_k^{\alpha'_k} \overset{def}{=} (1 - \alpha'_k) D_k + \alpha'_k O'_{(X_k, Y_k)}$$
and plugging $\alpha'_k$ and $O'_{(X_k, Y_k)}$ into \textbf{Definition} \ref{def:part_obs_open_learn}, we have
$$\max_{k=1, \dots, T} \mathbb{E}_{S \sim D_{[1:k]}} [\mathbf{R}_{D_{k}^{\alpha'_k}} (h_k) - \inf_{h \in \mathcal{H}} \mathbf{R}_{D_{k}^{\alpha'_k}} (h)] < \epsilon_n. $$

Denote $h_k = {\arg\max} \{\dots, z_k^{j},\dots; z_k^o\}$, where $z_k^{j} $ is the score function of the $j$-th class of the $k$-th task, and $z_k^o$ is the score function of the OOD class of the $k$-th task. 
Denote the label of the $j$-th class of the $i$-th task as $y^{i, j}$. 
Denote the label of OOD class of the $i$-th task as $y^{i, o}$. 
We define $$\Tilde{h}_k(x) = \left\{
\begin{aligned}
    &y^{i, j}\ \text{if}\  h_{i'}(x) = y^{i', o}, \forall\, i' < i,\ \text{and}\ h_{i}(x) = y^{i, j}; \\
    &y^{k, o}\ \text{if}\  h_{i'}(x) = y^{i', o}, \forall\, i' \leq k. 
\end{aligned}
\right.$$

By definition of $\Tilde{h}_k$, when $\Tilde{h}_k$ makes a wrong prediction, there exists a $h_{i'}, i' \leq k$ that makes a mistake. 
We assume that the loss function satisfies the following inequality 
\begin{equation}
l(\Tilde{h}_k(x), y) \leq \left\{
\begin{aligned}
    &\sum_{t=1}^{i-1} l(h_t(x), y^{t, o}) + l(h_i(x), y^{i, j}), \\
    &\ \ \ \ \ \ \ \ \ \ \ \ \ \ \ \ \text{if}\  h_{i'}(x) = y^{i', o}, \forall\, i' < i,\ \text{and}\ h_{i}(x) = y^{i, j};\\
    &\sum_{t=1}^{k} l(h_t(x), y^{t, o}), \\
    &\ \ \ \ \ \ \ \ \ \ \ \ \ \ \ \ \text{if}\  h_{i'}(x) = y^{i', o}, \forall\, i' \leq k.
\end{aligned}
\right.
\label{eq:loss_assump}
\end{equation}
Then we can decompose the risk function of $\Tilde{h}_k$, 
\begin{equation}
\begin{aligned}
    \mathbf{R}_{D_{[1:k]}^{\alpha_{[1: k]}}} (\Tilde{h}_k)
    &= \mathbb{E}_{(x, y) \sim D_{[1:k]}^{\alpha_{[1: k]}}} l(\Tilde{h}_k (x), y) \\
    &= \sum_{i=1}^{k} \mathbb{E}_{(x, y) \sim D_{[1:k]}^{\alpha_{[1: k]}}} l(\Tilde{h}_k (x), y) 1_{(x, y) \sim D_{i}} \\
    &\ \ \ + \sum_{i=1}^{k} \mathbb{E}_{(x, y) \sim D_{[1:k]}^{\alpha_{[1: k]}}} l(\Tilde{h}_k (x), y) 1_{(x, y) \sim O_{i}} \\
    &\leq \sum_{i=1}^{k} \mathbb{E}_{(x, y) \sim D_{[1:k]}^{\alpha_{[1: k]}}} [\sum_{t=1}^{i-1} l(h_t (x), y^{t, o}) + l(h_i (x), y^{i, j})] 1_{(x, y) \sim D_{i}} \\
    &\ \ \ + \sum_{i=1}^{k} \mathbb{E}_{(x, y) \sim D_{[1:k]}^{\alpha_{[1: k]}}} [\sum_{t=1}^{i} l(h_t (x), y^{t, o}) ]  1_{(x, y) \sim O_{i}}.
\end{aligned}
\label{eq:risk_decomp}
\end{equation}

With the fact that for any density function $p(x)$ defined on $A$ and any $B \subset A$, 
$$
\int_{A} \frac{p(x)}{\int_A p(x) dx} f(x) 1_{x \in B} dx 
= \frac{\int_B p(x) dx}{\int_A p(x) dx} \int_{B} \frac{p(x)}{\int_B p(x) dx} f(x) dx,
$$
by definition of $\alpha'_k$ and $O'_{(X_k, Y_k)}$, we have that for $t < i \leq k$, 
$$
\begin{aligned}
    &\ \ \ \ \mathbb{E}_{(x, y) \sim D_{[1:k]}^{\alpha_{[1: k]}}} l(h_t (x), y^{t, o}) 1_{(x, y) \sim D_{i}} \\
    &= \frac{(1 - \alpha_i)\pi_i}{\pi_{[1:k]}} \mathbb{E}_{(x, y) \sim D_{i}} l(h_t (x), y^{t, o}) \\
    &= \frac{(1 - \alpha_i)\pi_i}{\pi_{[1:k]}} \frac{\pi_{[t:T]}}{(1-\alpha_i)\pi_i} \mathbb{E}_{(x, y) \sim D_{t}^{\alpha'_t}} l(h_t (x), y^{t, o}) 1_{(x, y) \sim D_{i}} \\
    &= \frac{\pi_{[t:T]}}{\pi_{[1:k]}} \mathbb{E}_{(x, y) \sim D_{t}^{\alpha'_t}} l(h_t (x), y^{t, o}) 1_{(x, y) \sim D_{i}}, \\
\end{aligned}
$$
$$
\begin{aligned}
    &\ \ \ \ \mathbb{E}_{(x, y) \sim D_{[1:k]}^{\alpha_{[1: k]}}} l(h_i (x), y^{i, j}) 1_{(x, y) \sim D_{i}} \\
    &= \frac{(1 - \alpha_i)\pi_i}{\pi_{[1:k]}} \mathbb{E}_{(x, y) \sim D_{i}} l(h_i (x), y^{i, j}) \\
    &= \frac{(1 - \alpha_i)\pi_i}{\pi_{[1:k]}} \frac{\pi_{[i:T]}}{(1-\alpha_i)\pi_i} \mathbb{E}_{(x, y) \sim D_{i}^{\alpha'_i}} l(h_i (x), y^{i, j}) 1_{(x, y) \sim D_{i}} \\
    &= \frac{\pi_{[i:T]}}{\pi_{[1:k]}} \mathbb{E}_{(x, y) \sim D_{i}^{\alpha'_i}} l(h_i (x), y^{i, j}) 1_{(x, y) \sim D_{i}}, \\
\end{aligned}
$$
and for $t \leq i \leq k$, 
$$
\begin{aligned}
    &\ \ \ \ \mathbb{E}_{(x, y) \sim D_{[1:k]}^{\alpha_{[1: k]}}} l(h_t (x), y^{t, o}) 1_{(x, y) \sim O_{i}} \\
    &=\frac{\alpha_i \pi_i}{\pi_{[1:k]}} \mathbb{E}_{(x, y) \sim O_{i}} l(h_t (x), y^{t, o}) \\
    &= \frac{\alpha_i \pi_i}{\pi_{[1:k]}} \frac{\pi_{[t:T]}}{\alpha_i \pi_i} \mathbb{E}_{(x, y) \sim D_{t}^{\alpha'_t}} l(h_t (x), y^{t, o}) 1_{(x, y) \sim O_{i}} \\
    &= \frac{\pi_{[t:T]}}{\pi_{[1:k]}} \mathbb{E}_{(x, y) \sim D_{t}^{\alpha'_t}} l(h_t (x), y^{t, o}) 1_{(x, y) \sim O_{i}}. \\
\end{aligned}
$$
Plugging the above three equations into Eq.~\ref{eq:risk_decomp}, we have 
$$
\begin{aligned}
    \mathbf{R}_{D_{[1:k]}^{\alpha_k}} (\Tilde{h}_k)
    &\leq \sum_{i=1}^{k} \sum_{t=1}^{i-1} \frac{\pi_{[t:T]}}{\pi_{[1:k]}} \mathbb{E}_{(x, y) \sim D_{t}^{\alpha'_t}} l(h_t (x), y^{t, o}) 1_{(x, y) \sim D_{i}} \\ 
    &\ \ \ + \sum_{i=1}^{k} \frac{\pi_{[i:T]}}{\pi_{[1:k]}} \mathbb{E}_{(x, y) \sim D_{i}^{\alpha'_i}} l(h_i (x), y^{i, j}) 1_{(x, y) \sim D_{i}} \\
    &\ \ \ + \sum_{i=1}^k \sum_{t=1}^{i} \frac{\pi_{[t:T]}}{\pi_{[1:k]}} \mathbb{E}_{(x, y) \sim D_{t}^{\alpha'_t}} l(h_t (x), y^{t, o}) 1_{(x, y) \sim O_{i}} \\
    &= \sum_{t=1}^{k} \sum_{i=t+1}^{k} \frac{\pi_{[t:T]}}{\pi_{[1:k]}} \mathbb{E}_{(x, y) \sim D_{t}^{\alpha'_t}} l(h_t (x), y^{t, o}) 1_{(x, y) \sim D_{i}} \\ 
    &\ \ \ + \sum_{t=1}^{k} \frac{\pi_{[t:T]}}{\pi_{[1:k]}} \mathbb{E}_{(x, y) \sim D_{t}^{\alpha'_t}} l(h_t (x), y^{t, j}) 1_{(x, y) \sim D_{t}} \\
    &\ \ \ + \sum_{t=1}^k \sum_{i=t}^{k} \frac{\pi_{[t:T]}}{\pi_{[1:k]}} \mathbb{E}_{(x, y) \sim D_{t}^{\alpha'_t}} l(h_t (x), y^{t, o}) 1_{(x, y) \sim O_{i}} \\
    &= \sum_{t=1}^k \frac{\pi_{[t:T]}}{\pi_{[1:k]}} \mathbb{E}_{(x, y) \sim D_{t}^{\alpha'_t}} l(h_t (x), y^{t, *}) 1_{(x, y) \sim \cup_{i=t}^k (D_{i} \cup O_{i})} \\
    &= \sum_{t=1}^k \frac{\pi_{[t:T]}}{\pi_{[1:k]}} \mathbf{R}_{D_t^{\alpha'_t}}(h_t). 
\end{aligned}
$$

By assumption that $\inf_{h \in \mathcal{H}} \mathbf{R}_{D_{[1:k]}^{\alpha_{[1: k]}}} (h) = 0$, it's obvious that $\inf_{h \in \mathcal{H}} \mathbf{R}_{D_{k}^{\alpha'_{k}}} (h) = 0$,
which means that 
$$
\max_{k=1, \dots, T} \mathbb{E}_{S \sim D_{[1:k]}} [\mathbf{R}_{D_{k}^{\alpha'_k}} (h_k)] < \epsilon_n.
$$
Therefore, we have
$$
\begin{aligned}
    &\ \ \ \max_{k=1, \dots, T} \mathbb{E}_{S \sim D_{[1:k]}} [\mathbf{R}_{D_{[1:k]}^{\alpha_{[1: k]}}} (\Tilde{h}_k) - \inf_{h \in \mathcal{H}} \mathbf{R}_{D_{[1:k]}^{\alpha_{[1: k]}}} (h)] \\
    &\leq \max_{k=1, \dots, T} \mathbb{E}_{S \sim D_{[1:k]}} [\sum_{t=1}^k \frac{\pi_{[t:T]}}{\pi_{[1:k]}} \mathbf{R}_{D_t^{\alpha'_t}}(h_t)]  \\
    &< \epsilon_n \cdot \max_{k=1, \dots, T} \sum_{t=1}^k \frac{\pi_{[t:T]}}{\pi_{[1:k]}}.
\end{aligned}
$$

\end{proof}

\clearpage

\begin{proof}[Proof of \textbf{Corollary} \ref{thm:def4_imply_def3_replay}]

Denote the algorithm that satisfies \textbf{Definition} \ref{def:part_obs_open_learn} as $\textbf{A}_k^r$. 
Given any $\epsilon_n$ and $S \sim D_{[1:k]}$, denote $h_k = \textbf{A}_k^r (S)$. 
Based on $h_k$, we construct a $\Tilde{h}_k$ that satisfies \textbf{Definition} \ref{def:full_obs_open_learn}.

For simplicity, we denote $\pi_{[k: k']} = \sum_{i=k}^{k'} \pi_i$.

Different from proof of \textbf{Theorem} \ref{thm:def4_imply_def3}, when we can acquire replay data and therefore treat them as OOD data, we let 
$$
\alpha'_k = 1 - \frac{\pi_k}{\pi_{[1:T]}} (1 - \alpha_k)
$$
and
$$
O'_{(X_k, Y_k)} = \frac{\pi_k}{\pi_{[1:T]}} \alpha_k O_{(X_k, Y_k)} + \sum_{i \neq k} \frac{\pi_i}{\pi_{[1:T]}} [(1 - \alpha_i) D_i + \alpha_i O_{(X_i, Y_i)}].
$$
Defining
$$D_k^{\alpha'_k} \overset{def}{=} (1 - \alpha'_k) D_k + \alpha'_k O'_{(X_k, Y_k)}$$
and plugging $\alpha'_k$ and $O'_{(X_k, Y_k)}$ into \textbf{Definition} \ref{def:part_obs_open_learn}, we have
$$\max_{k=1, \dots, T} \mathbb{E}_{S \sim D_{[1:k]}} [\mathbf{R}_{D_{k}^{\alpha'_k}} (h_k) - \inf_{h \in \mathcal{H}} \mathbf{R}_{D_{k}^{\alpha'_k}} (h)] < \epsilon_n. $$

Denote $h_k = {\arg\max} \{\dots, z_k^{j},\dots; z_k^o\}$, where $z_k^{j} $ is the score function of the $j$-th class of the $k$-th task, and $z_k^o$ is the score function of the OOD class of the $k$-th task. 
Denote the label of the $j$-th class of the $i$-th task as $y^{i, j}$. 
Denote the label of OOD class the the $i$-th task as $y^{i, o}$. 
We define $$\Tilde{h}_k(x) = \left\{
\begin{aligned}
    &y^{i, j}\ \text{if}\  h_{i}(x) = y^{i, j}, \exists\, i \leq k; \\
    &y^{k, o}\ \text{if}\  h_{i'}(x) = y^{i', o}, \forall\, i' \leq k. 
\end{aligned}
\right.$$

It's ideal that $h_{i'}(x) = y^{i', o}, \forall\, i' \neq i$ when $h_{i}(x) = y^{i, j}, \exists\, i \leq k$. But when $\Tilde{h}_k$ makes a wrong prediction, there exists a $h_{i'}, i' \leq i$ that makes a mistake. 
We assume that the loss function satisfies the following inequality 
\begin{equation}
l(\Tilde{h}_k(x), y) \leq \left\{
\begin{aligned}
    &\sum_{t \neq i} l(h_t(x), y^{t, o}) + l(h_i(x), y^{i, j}), \\
    &\ \ \ \ \ \ \ \ \ \ \ \ \ \ \ \ \text{if}\  h_{i}(x) = y^{i, j}, \exists\, i \leq k;\\
    &\sum_{t=1}^{k} l(h_t(x), y^{t, o}), \\
    &\ \ \ \ \ \ \ \ \ \ \ \ \ \ \ \ \text{if}\  h_{i'}(x) = y^{i', o}, \forall\, i' \leq k.
\end{aligned}
\right.
\label{eq:loss_assump_replay}
\end{equation}

All the same as the proof of \textbf{Theorem} \ref{thm:def4_imply_def3}, we have 
$$
\begin{aligned}
    \mathbf{R}_{D_{[1:k]}^{\alpha_{[1: k]}}} (\Tilde{h}_k)
    &= \mathbb{E}_{(x, y) \sim D_{[1:k]}^{\alpha_{[1: k]}}} l(\Tilde{h}_k (x), y) \\
    &\leq \sum_{i=1}^{k} \mathbb{E}_{(x, y) \sim D_{[1:k]}^{\alpha_{[1: k]}}} [\sum_{t \neq i} l(h_t (x), y^{t, o}) + l(h_i (x), y^{i, j})] 1_{(x, y) \sim D_{i}} \\
    &\ \ \ + \sum_{i=1}^{k} \mathbb{E}_{(x, y) \sim D_{[1:k]}^{\alpha_{[1: k]}}} [\sum_{t=1}^{i} l(h_t (x), y^{t, o}) ]  1_{(x, y) \sim O_{i}} \\
    &\leq \epsilon_n \cdot \max_{k=1, \dots, T} \sum_{t=1}^k \frac{\pi_{[1:T]}}{\pi_{[1:k]}} 
    = \epsilon_n \cdot \max_{k=1, \dots, T} \frac{k \pi_{[1:T]}}{\pi_{[1:k]}}.
\end{aligned}
\label{eq:risk_decomp_2}
$$

\end{proof}

\clearpage

\section{Hard Attention to the Task (HAT)} \label{sec:hat}
In training the network $h_{k} \circ f_k$ using the data of task $k$ and the generated pseudo feature vectors, we employ the hard attention mask~\cite{Serra2018overcoming} to prevent forgetting in the feature extractor. 

The hard attention mask $a_{l}^{k}$ is a trainable pseudo binary 0-1 vector at each layer $l$ of task $k$. It is element-wise multiplied to the output of the layer as $a_{l}^{k} \otimes h_{l}$ and blocks (for value of $0$) or unblocks (for value of 1) the information flow from neurons of adjacent layers. Neurons with value 1 are important for the task and thus need to be protected while neurons with value 0 are not necessary for the task and can be freely modifed without affecting other tasks.

More specifically, we modify the gradients of parameters that are important in performing the previous tasks $(1, \cdots, k-1)$ during training task $k$ so the important parameters for previous tasks are unaffected. The gradient of parameter $w_{ij, l}$ at $i$th row and $j$th column of layer $l$ is modified as
\begin{align}
    \nabla w_{ij, l}' = \left( 1 - \min\left( a_{i,l}^{<k}, a_{j, l-1}^{<k} \right) \right) \nabla w_{ij, l}, \label{eq:grad_mod}
\end{align}
where $a_{i,l}^{<k}$ is an accumulated attentions over previous tasks and is 1 if the hard attention of neuron $i$ at layer $l$ is ever used by any previous task $< k$ (see~\cite{Serra2018overcoming} for details).

To encourage parameter sharing and sparsity in the number of activated masks, a regularization is introduced as $\mathcal{L}_r = \sum_{l,i} a_{i,l}^{k}( 1 - a_{i,l}^{<k} ) \big/ \sum_{l,i} (1 - a_{i, l}^{<k} )$.
The final objective to train a comprehensive task network without forgetting is
\begin{align}
    \mathcal{L} = \mathcal{L}_{ood} + \mathcal{L}_{r}, \label{eq:final_obj}
\end{align}
where $\mathcal{L}_{ood}$ is the cross-entropy loss in Eq.~\ref{eq.feature_train}.

\section{Hyper-Parameters} \label{sec.hyper_param}
For all the experiments, we use SGD with momentum value of 0.9 with batch size of 64. For C10-5T, we use learning rate 0.005 and train for 20 epochs. For C100-10T and 20T, we train for 40 epochs with learning rate 0.001 and 0.005 for 10T and 20T, respectively. For T-5T and 10T, we use the same learning rate 0.005, but train for 15 and 10 epochs for 5T and 10T, respectively. For fine-tuning WP and OOD head, we use batch size of 32 and use the same learning rate used for training the feature extractor. For fine-tuning WP and TP, we train for 5 epochs and 10 epochs, respectively.

\section{Required Memory}
\begin{table*}[h]
\caption{
The size of the model (in entries) required for each method without the memory buffer.
}
\vspace{0.1in}
\centering
\resizebox{0.5\columnwidth}{!}{
\begin{tabular}{l c c c c c}
\toprule
\multirow{1}{*}{Method}  & \multicolumn{1}{c}{C10-5T}  &  \multicolumn{1}{c}{C100-10T} &  \multicolumn{1}{c}{C100-20T} &  \multicolumn{1}{c}{T-5T} & \multicolumn{1}{c}{T-10T} \\
\midrule
HAT & 23.0M & 24.7M & 25.4M & 24.6M & 25.1M \\
OWM & 26.6M & 28.1M & 28.1M & 28.2M & 28.2M \\
SLDA & 21.6M & 21.6M & 21.6M & 21.7M & 21.7M \\
PASS & 22.9M & 24.2M & 24.2M & 24.3M & 24.4M \\
L2P & 21.7M & 21.7M & 21.7M & 21.8M & 21.8M \\
iCaRL & 22.9M & 24.1M & 24.1M & 24.1M & 24.1M \\
A-GEM & 26.5M & 31.4M & 31.4M & 31.5M & 31.5M \\
EEIL & 22.9M & 24.1M & 24.1M & 24.1M & 24.1M \\
GD & 22.9M & 24.1M & 24.1M & 24.1M & 24.1M \\
DER++ & 22.9M & 24.1M & 24.1M & 24.1M & 24.1M \\
HAL & 22.9M & 24.1M & 24.1M & 24.1M & 24.1M \\
MORE & 23.7M & 25.9M & 27.7M & 25.1M & 25.9M  \\
ROW & 23.7M & 26.0M & 27.8M & 25.2M & 26.0M \\
\bottomrule
\end{tabular}
}
\label{Tab:memory}
 \vspace{-3mm}
\end{table*}
We report the network sizes of the systems after learning the last task. We use an `entry' to denote a parameter or a value required to learn and to do inference for a task. 

All the systems except SLDA and L2P use the feature extractor DeiT-S/16~\citep{touvron2021training_deit} and adapter modules. The transformer consumes 21.6 millions (M) entries and the adapters take 1.2M and 2.4M entries for CIFAR10 and the other datasets. SLDA fine-tunes only the classifier on top of the fixed pre-trained feature extractor as it does not have a protection mechanism. L2P uses a prompt pool with 23k entries. Since each method requires method-specific elements (e.g., task embedding for HAT), the number of entries required for each method is different. The number of entries for each model is reported in Tab.~\ref{Tab:memory}.

Our system saves the covariance matrices for computing the distance-based coefficient in Sec.~\ref{sec.dist_coef}. The covariances are saved for each task. Since each covariance is in size 384x384, the total entries for this step are 737.3k, 1.5M, 2.9M, 737.3k, and 1.5M for C10-5T, C100-10T, C100-20T, T-5T, and T-10T, respectively. The numbers are relatively small considering that some of the replay-based methods (e.g., iCaRL, HAL) require a teacher model the same size as the training model for knowledge distillation. More importantly, replay buffer requires the largest memory (e.g., for Tiny-ImageNet, saving 2,000 images of size 64x64x3 requires 24.6M entries). It is highly important that the system is robust to replay buffer size. ROW is shown to remain strong with small replay buffer sizes (see Tab.~\ref{Tab:smaller_memory}).

Our system ROW saves an additional classifier. WP head is of the same shape as the classifier of the standard baselines (e.g., iCaRL or DER++). OOD (or TP) head requires the same memory as WP with additional parameters for OOD class per task. The required memory is small. For instance, for C10-5T, using OOD head only introduces 5,775 additional entries.

{
\section{Societal Impact and Limitation}\label{impact_limit}
We do not see any negative societal impact as we use public domain datasets for the experiments and our algorithm is just like any normal supervised learning in nature. In practical use, if the training data of the application is biased, it could affect the model just like in any other supervised learning. We believe that this can be alleviated by checking any potential bias in the training data. 

The current theoretical study is only applicable to offline CIL. In future work, we will extend our study to online CIL, where the task boundary may be blurry.

\section{Effect on Underrepresented Minorities} \label{minority}
The existence of underrepresented samples in the current task does not affect our theory, but it will affect an actual implementation and give weaker results. The OOD detection method in our paper simply trains the system by considering the samples of the current task as in-distribution and the samples in the replay buffer as OOD of the current task. In case there is a set of underrepresented classes in the current task’s dataset, one can use existing techniques proposed for the sample imbalance problem to alleviate the issue. However, this problem is not just relevant to our proposed method, but relevant to all existing OOD and CL methods, or even supervised learning methods.
}

\end{document}